\documentclass[11pt,reqno]{amsart}

\usepackage{longtable}

\usepackage{booktabs}

\usepackage[round,semicolon]{natbib}
\usepackage{amssymb}
\usepackage{braket}
\usepackage{tikz-cd}
\usepackage{enumitem}
\usepackage{graphicx}
\usepackage{url}
\usepackage{hyperref}
\hypersetup{
	colorlinks=true,
	linkcolor={red!60!black},   
	citecolor={green!45!black}, 
	urlcolor={blue!60!black},   
}
\usepackage[margin=1.15in]{geometry}
\theoremstyle{plain}
\newtheorem{theorem}{Theorem}
\newtheorem{lemma}[theorem]{Lemma}
\newtheorem{proposition}[theorem]{Proposition}
\newtheorem{corollary}[theorem]{Corollary}

\theoremstyle{definition}
\newtheorem{definition}[theorem]{Definition}
\newtheorem{example}{Example}

\theoremstyle{remark}

\title[No Equivariant Architecture Covers All Equivariant Attention]{No Equivariant Architecture Covers\\ All Equivariant Attention}

\author{T\={i}kun \^Ong}
\thanks{Email: tikunong@gmail.com}
\email{tikunong@gmail.com}

\makeatletter
\def\@secnumfont{\bfseries}
\def\section{\@startsection{section}{1}%
  \z@{.7\linespacing\@plus\linespacing}{.5\linespacing}%
  {\normalfont\large\bfseries\centering}}
\def\subsection{\@startsection{subsection}{2}%
  \z@{.5\linespacing\@plus.7\linespacing}{.3\linespacing}%
  {\normalfont\bfseries}}
\makeatother

\begin{document}

\begin{abstract}
	We give a complete characterization of equivariant multi-head self-attention
	(MHSA): if an MHSA layer is equivariant to a symmetry group $G$, then $G$ can
	only act by permuting head-clusters, with QK and OV matrices
	satisfying an equivariance constraint tied to the group action. As a
	consequence, we prove that any fixed MHSA architecture that achieves exact
	equivariance by polynomially parameterizing unconstrained MHSA parameters
	inevitably leads to expressivity loss within the class of equivariant
	maps:
	the equivariance locus of unconstrained MHSA
	forms a union of extremely many Zariski-irreducible components
	in a reduced parameter space,
	and any single architecture covers at most one. For $G=D_4$ acting on $C$
	copies of the regular representation as the token feature space, we
	show that there are $\Omega(C^{64})$ components for eight attention
	heads.
\end{abstract}

\maketitle

\setcounter{tocdepth}{1}
\tableofcontents


\section{Introduction}

Symmetry is naturally present in many learning tasks. Semantic segmentation of
aerial or medical images often should not depend on the orientation of the
input image. The ground state energy of a molecule does not depend on the
spatial reference frame used to encode each atom's position. Data living in
vector bundles over a manifold exist independently of the chosen local frames,
which are inevitable for computations. Mathematically, the property of
respecting a given symmetry is often formalized as equivariance,
which a model may possess either at the architectural level or
by learning.

Much effort has been dedicated to designing equivariant architectures. In fact,
convolutions are exactly functions that are equivariant to translations. The
generalization of convolutions to arbitrary symmetry groups led to the group
convolutional neural networks (GCNNs)
\citep{cohenGroupEquivariantConvolutional2016,weiler3DSteerableCNNs2018,kondorGeneralizationEquivarianceConvolution2018,bekkersRotoTranslationCovariantConvolutional2018,cohenGeneralTheoryEquivariant2020}.

Attention \citep{vaswaniAttentionAllYou2017} has become
a paradigmatic primitive for modern state of the art models.
In vision \citep{dosovitskiyImageWorth16x162021},
image patches are interpreted as a sequence of tokens
to be processed by multi-head self-attention (MHSA),
and the resulting architecture is the Vision Transformer (ViT).
Many current widely used vision models are based on variants of the ViT
\citep{liuSwinTransformerHierarchical2021,simeoniDINOv32025}.
Rather recently, there have been
several works on making attention equivariant \citep{romeroGroupEquivariantStandAlone2021,xu$E2$EquivariantVisionTransformer2023,fuVanillaGroupEquivariant2026},
including a line of work
showing advantages in efficiency at matched accuracy
\citep{bokmanFloppingFLOPsLeveraging2025,nordstromQuickViTsSpeeding2026,ongUnifiedFrameworkVision2026}.

While equivariance encodes the correct inductive bias, it is not clear whether
imposing layerwise equivariance, which is the most common approach to
equivariant learning, could be too rigid. The slightly more precise question we
ask is: given some fixed layer, can a constrained version of that layer
represent exactly those functions that are both equivariant and representable
by the unconstrained one, and nothing more? Note that this is true for linear
layers, for which the constrained layer would be a convolution.

In this paper, we prove a structure theorem for equivariant MHSA. That is,
assuming that a symmetry group acts on some token feature space $V$, we give a
complete characterization of maps $\mathbb{R}^L\otimes V\rightarrow
	\mathbb{R}^L\otimes V$ that are both $G$-equivariant and expressible as a
multi-head self-attention.

Based on this result, we show, using algebraic geometry, that the space of
equivariant functions is very intricate: it is a union of extremely many
``components'', and any given architecture (way of constraining equivariance)
can cover at most one component. Hence, an MHSA with hard equivariance
constraints can never represent all equivariant functions representable by the
unconstrained MHSA. Moreover, as the feature dimension grows, the number of
these components proliferates. This serves as a mathematically rigorous mechanism
consistent with \cite{BitterLesson}, where it is claimed that careful
architectural designs to impose inductive bias often do not scale well.

\section{Related Work}

\textit{Equivariant architectures.}
Most existing equivariant neural network architectures are equivariant
convolutional neural networks
\citep{cohenGroupEquivariantConvolutional2016,kondorGeneralizationEquivarianceConvolution2018,bekkersRotoTranslationCovariantConvolutional2018,weiler_general_2019,weiler3DSteerableCNNs2018,cohenGeneralTheoryEquivariant2020,finziGeneralizingConvolutionalNeural2020,
	weilerCoordinateIndependentConvolutional2021,weiler_equivariant_2026}.
This paper says nothing about these.
Equivariant transformers, on the other hand, rely on equivariant attention,
which is the topic of this paper
and has been achieved either by lifting features to functions on the group
\citep{hutchinsonLieTransformerEquivariantSelfattention2021,romeroGroupEquivariantStandAlone2021,xu$E2$EquivariantVisionTransformer2023,fuVanillaGroupEquivariant2026}
or working directly in the Fourier space
\citep{fuchsSE3Transformers3DRotoTranslation2020,bokmanFloppingFLOPsLeveraging2025,nordstromQuickViTsSpeeding2026,
	ongUnifiedFrameworkVision2026}.

\textit{Universality.}
Universality of equivariant networks is usually studied in the approximate
sense, i.e., uniform convergence on compact sets, for
shallow or deep MLPs
\citep{yarotskyUniversalApproximationsInvariant2018,ravanbakhshUniversalEquivariantMultilayer2020,paciniUniversalityDeepEquivariant2025}.
Our work concerns exact representation of MHSA, which is what enables the
irreducible-component counting in Section~\ref{app:irrcomp-proof}.
We do not say anything about the structure of an MHSA layer that is approximately equivariant.

\textit{Identifiability and neuroalgebraic geometry.}
Identifiability refers to the ability to recover the parameters of a
model, up to certain predefined ``gauge symmetries'', from the function it represents.
\cite{shahverdiIdentifiableEquivariantNetworks2026} observes
that identifiability together with the so-called \textit{adjunction property}
implies layerwise equivariance for equivariant functions.
\citet{henryGeometryLightningSelfAttention2026}
establishes identifiability for lightning (``linear'', without softmax)
self-attention. More recently, \citet{henry_generic_2026}
characterizes the generic fiber of ordinary MHSA. The problem of understanding
the image and fibers of the realization map of a neural network can sometimes
be studied using tools from algebraic geometry. This is the neuroalgebraic
geometry program \cite{marchettiAlgebraUnveilsDeep2025}. For example,
\citet{kohnGeometryLinearNeural2025} study the equivariance locus of a
two-layer linear network for cyclic and permutation groups.
The irreducible components they find have the same origin as the ones
described in Section~\ref{app:irrcomp-proof}.

\section{Preliminaries: MHSA and Reduced MHSA}

Let $V$ be any finite-dimensional inner product space
with an orthogonal decomposition $V = \bigoplus_{h\in H}V_h$ into \textit{heads}.
For four linear maps $\phi_q, \phi_k, \phi_v, \phi_o\in \mathrm{End}(V)$,
the multi-head self-attention on $L$ tokens with respect to the given orthogonal decomposition
is
given by
\begin{equation}
	\label{eq:attn-def}
	\mathrm{MHSA}(x; \phi_q,\phi_k,\phi_v,\phi_o)_i =
	\phi_o\sum_{h\in H} \frac{\sum_{j=1}^L\exp(\braket{\phi_q x_i, \pi_h\phi_k x_j})\pi_h\phi_vx_j}{\sum_{j=1}^L\exp(\braket{\phi_q x_i, \pi_h \phi_k x_j})},
\end{equation}
where $x\in \mathbb{R}^L\otimes V$ is a token sequence and $\pi_h: V
	\rightarrow V_h$ is the orthogonal projection onto the $h$-th head. Clearly,
Eq.~\eqref{eq:attn-def} depends only on $M_h:= \phi_q^* \pi_h \phi_k$ (where
$(\cdot)^*$ is the adjoint/transpose) and $R_h := \phi_o\pi_h\phi_v$,
the QK and OV matrices in the terminology of \citet{elhageMathematicalFrameworkTransformer2021}.
This motivates the following definition:
\begin{definition}
	For $V = \bigoplus_{h\in H}V_h$ and $(M_h)_{h\in H}, (R_h)_{h\in H}$ with
	$M_h, R_h \in \mathrm{End}(V)$, we define \textbf{reduced multi-head
		self-attention (RMHSA)} to be
	\begin{equation}
		\mathrm{RMHSA}(x; (M_h)_{h\in H}, (R_h)_{h\in H})_i
		:= \sum_{h\in H} \frac{\sum_{j=1}^L\exp(\braket{x_i, M_h x_j})R_h x_j}{\sum_{j=1}^L \exp(\braket{x_i, M_hx_j})}.
	\end{equation}
\end{definition}
Note that ``reduced'' refers to the reduction in parameterization redundancy,
not in expressive power. Indeed, a priori, RMHSA could have more expressive
power than MHSA if the ranks of the $M_h$ and $R_h$ are unconstrained.


\section{The Structure Theorem for Equivariant MHSA}

The following theorem is the first of our two main results, which completely
characterizes the set of parameters, i.e., pairs $(M, R)\in
	\mathrm{End}(V)^H\times \mathrm{End}(V)^H$, for which $\mathrm{RMHSA}(-; M, R)$
is $G$-equivariant. Since $\mathrm{MHSA}(-; \phi_q, \phi_k, \phi_v, \phi_o) =
	\mathrm{RMHSA}(-; (\phi_q^*\pi_h \phi_k)_{h\in H}, (\phi_o\pi_h\phi_v)_{h\in
			H})$, the corresponding characterization for $\mathrm{MHSA}$ is immediate. We
will sometimes refer to this set of parameters as the \textit{equivariance
	locus}.

\begin{theorem}[Structure of equivariant MHSA]
	\label{thm:equiv-attn-char}
	Let $G$ be any group and $V$ a finite-dimensional orthogonal
	$G$-representation.
	Assume $L \ge 3$. For any $(M_h)_{h\in H}$ and
	$(R_h)_{h\in H}$, the map
	$\mathrm{RMHSA}(- ; (M_h)_{h\in H}, (R_h)_{h\in H}): \mathbb{R}^L\otimes V
		\rightarrow \mathbb{R}^L \otimes V$ is $G$-equivariant if and only if there
	is a surjective map $\chi: H \rightarrow X_0\sqcup X$, where $X$ is a
	$G$-set, such that:
	\begin{enumerate}[itemsep=3pt, topsep=5pt, parsep=1pt]
		\item[(1)] $h\mapsto M_h$ is constant along the fibers of $\chi$. Let $M_p := M_h$ for any $h\in \chi^{-1}(p)$.
		\item[(2)] The map $p \mapsto M_p$ is $G$-equivariant on $X$. That is, $M_{gp} = gM_pg^{-1}$.
		\item[(3)] The map $p \mapsto R_p := \sum_{h\in \chi^{-1}(p)}R_h$ is $G$-equivariant on $X$ and zero on $X_0$.
	\end{enumerate}
\end{theorem}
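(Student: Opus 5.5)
The sufficiency direction is a direct computation. Group the heads by $\chi$: since $M_h$ is constant on fibers, $\mathrm{RMHSA}(x)_i=\sum_{p\in X_0\sqcup X}A_p(x)_i$ with
\[
A_p(x)_i=\frac{\sum_j \exp(\braket{x_i,M_px_j})R_px_j}{\sum_j\exp(\braket{x_i,M_px_j})}.
\]
The terms with $p\in X_0$ vanish because $R_p=0$. For $p\in X$, orthogonality of $g$ gives $\braket{gx_i,M_pgx_j}=\braket{x_i,g^{-1}M_pg\,x_j}=\braket{x_i,M_{g^{-1}p}x_j}$ and $R_pgx_j=gR_{g^{-1}p}x_j$. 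Hence $A_p(gx)=gA_{g^{-1}p}(x)$, and summing over the $G$-set $X$ (reindexing $p\mapsto g^{-1}p$) yields equivariance.

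For necessity I would extract the data $\chi$ from the function by an identifiability argument for sums of softmax-attention heads. First merge heads with equal $M_h$. This gives a set $P$ of distinct matrices $M_p$ with aggregated $R_p$. Let $X_0$ be those $p$ with $R_p=0$, and $X$ the rest. The key step is an independence lemma. Suppose $\sum_{p}A_p\equiv0$ on $\mathbb{R}^L\otimes V$ with the $M_p$ pairwise distinct. Then every $R_p=0$.

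To prove the lemma, use $L\ge3$. Set $x_1=y$, $x_2=z$, $x_3=\dots=x_L=0$, and look at token $1$. Then
\[
A_p(x)_1=\frac{e^{\braket{y,M_py}}R_py+e^{\braket{y,M_pz}}R_pz}{e^{\braket{y,M_py}}+e^{\braket{y,M_pz}}+(L-2)}.
\]
Scaling $z\mapsto tz$ with $t\in\mathbb{R}$ makes this an exponential-rational function of $t$ with exponents $\braket{y,M_pz}$. For generic $(y,z)$, the values $\braket{y,M_pz}$ separate distinct $M_p$, since $M_p-M_q\neq0$ gives a nonzero bilinear form. Comparing behaviour as $t\to\pm\infty$, or using linear independence of $e^{\lambda t}$ and $t e^{\lambda t}$ after clearing denominators, isolates the term $R_pz$ for each $p$ individually. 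The extra zero tokens are what keep the denominator from degenerating; this is where $L\ge3$ enters. The conclusion is $R_pz=0$ for all generic $z$.

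Now define the $G$-action on $X$. By the sufficiency computation, $g^{-1}\circ\mathrm{RMHSA}(g\,\cdot)$ is again an RMHSA with parameters $(g^{-1}M_pg,\,g^{-1}R_pg)$. Equivariance says it equals the original. Subtract the two and merge terms with equal QK matrix. The lemma then forces the multiset of pairs $(M_p,R_p)$ with $R_p\ne0$ to be invariant under conjugation by $g$, after aggregation. Concretely, for each $p\in X$ there is a unique $q\in X$ with $M_q=gM_pg^{-1}$ and $R_q=gR_pg^{-1}$. Setting $gp:=q$ defines a $G$-action on $X$, because conjugation is an action and the $M_p$ are distinct. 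This gives (1)–(3), with $\chi$ the map from heads to their merged class.

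The main obstacle is the independence lemma. Softmax normalizations differ per head, so the exponential sums do not separate trivially. I would first try the asymptotic $t\to\infty$ argument along a generic direction, peeling off the head with the largest exponent and inducting. If asymptotics get messy because of ties in $\braket{y,M_py}$, I would instead use analyticity in $(y,z)$ and a partial-fraction or exponential-polynomial uniqueness argument.
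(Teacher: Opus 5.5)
Your outer structure is right and matches the paper. You do the sufficiency computation, merge heads with equal $M_h$, reduce necessity to an independence statement for heads with pairwise distinct QK matrices, and set $gp$ to be the unique $q\in X$ with $M_q=gM_pg^{-1}$. That last step is a slightly cleaner route to the action than the paper's antihomomorphism $\pi_g$.

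The gap is in the independence lemma, which carries all the content, and specifically in the one place where $L\ge 3$ must be used. As sketched, your argument never uses it. The slice $x_1=y$, $x_2=tz$ also exists for $L=2$, and there the conclusion is false: the heads $(M,R)$, $(-M,R)$, $(0,-2R)$ sum to zero identically. With two tokens, the weights of $-M$ are those of $M$ with the keys swapped, so the first two heads sum to $R(x_1+x_2)$.

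To see concretely what goes wrong, write $a_p=e^{\langle y,M_py\rangle}$, $c_p=a_p+L-2$ and $\lambda_p=\langle y,M_pz\rangle$, so that $A_p=(a_pR_py+te^{\lambda_pt}R_pz)/(c_p+e^{\lambda_pt})$.
\begin{itemize}
\item Because of saturation, every head with $\lambda_p>0$ behaves like $tR_pz$ as $t\to+\infty$. The leading order therefore only sees $\sum_{\lambda_p>0}R_pz$, and ``peeling off the largest exponent'' isolates nothing.
\item Heads are distinguished only by exponentially small corrections. These decay like $e^{-|\lambda_p|t}$, since a head with $\lambda_p<0$ contributes $\approx c_p^{-1}e^{\lambda_pt}$.
\item If $M_q=-M_p$, then $|\lambda_q|=|\lambda_p|$ for every $(y,z)$. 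No genericity argument separates these two heads at either end.
\end{itemize}
Ties in $\langle y,M_py\rangle$ are irrelevant. Clearing denominators produces subset sums of the $\lambda_p$, where $\lambda_p+\lambda_q=0$ causes the same collision. Your remark that the zero tokens keep the denominator from degenerating points at the right thing, but the argument never uses it.

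The missing step is to peel from the \emph{smallest} rate $\mu=\min\{|\lambda_p|:\lambda_p\neq0\}$. For generic $(y,z)$, at most two heads contribute at that rate: $p$ with $\lambda_p=\mu$, and $q$ with $M_q=-M_p$. Read off the coefficient of $te^{-\mu|t|}$ at both ends:
\[
R_qz=c_pc_q\,R_pz \quad (t\to+\infty),\qquad R_pz=c_pc_q\,R_qz \quad (t\to-\infty).
\]
Hence $R_pz=R_qz=0$ precisely because $c_pc_q\neq 1$. Here $L\ge 3$ gives $c_p,c_q>1$, whereas for $L=2$ and $M_q=-M_p$ one has $c_pc_q=a_pa_q=1$ identically. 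The constant parts of the same coefficients give $R_py=R_qy=0$, so both heads vanish identically in $t$. You can then induct, treating a head with $M_p=0$ (which is polynomial in $t$) last. With this repair, your route is a valid alternative.

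The paper avoids asymptotics altogether. It uses the slice $x_1=x+y$, $x_{j\ge2}=x$, token $2$, and reduces equivariance to an identity between sums $\sum_h\sigma_{L-1}(\langle x,M_hy\rangle)R_hy$. It then takes all Taylor coefficients in $x$ at $0$. These are nonzero because $\sigma_{L-1}^{(n)}(0)$ is a multiple of the Eulerian polynomial value $A_n(-1/(L-1))$, and $A_n$ has no rational roots other than $\pm1$; this is exactly where $L\ge 3$ enters. The resulting moment identities in $(M_hy)^{\otimes n}\otimes R_hy$ are then separated by a Vandermonde argument.
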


The proof is found in Section~\ref{app:proof}.

Intuitively, the map $\chi$ groups together attention heads $h$ that share the same
attention bilinear form $M_h$, and only the sum $\sum_{h\in \chi^{-1}(p)}R_h$
of the output-projection-value map within each group is subject to
equivariance. The data $(H, \chi: H\rightarrow X_0\sqcup X, G \text{ action on
	} X)$ define a recurring object in the rest of the paper, especially in the
proof of Theorem~\ref{thm:components}, so we will give it a name:

\begin{definition}
	Let $H$ be a finite set and $G$ any group.
	A \textbf{$G$-clustering} is a $G$-set $X$ together with a surjective map $\chi: H \rightarrow X$.
	A \textbf{partial $G$-clustering} is a disjoint union $Y = X_0 \sqcup X$, where $X$ is a $G$-set,
	together with a surjective map $\chi: H\rightarrow Y$.
	If $\chi': H \rightarrow X_0'\sqcup X' =: Y'$ is another partial $G$-clustering,
	a \textbf{map of partial $G$-clusterings} is a map $f: Y \rightarrow Y'$
	such that $f\circ \chi = \chi'$, $f(X_0)\subset X_0', f(X)\subset X',$ and $f|_{X}$ is $G$-equivariant:
	\begin{equation}
		\begin{tikzcd}
			& H \arrow[swap]{dl}{\chi}\arrow{dr}{\chi'} & \\
			Y\arrow[swap]{rr}{f} & 													& Y'
		\end{tikzcd}
	\end{equation}
	The map $f$ is an \textbf{isomorphism} if it is bijective.
\end{definition}

As a consequence of our definition, there is at most one map between two
partial $G$-clusterings. The only implication of this that will be useful for
us is the following: if two partial $G$-clusterings are isomorphic, then there
is a unique isomorphism between them. So, it is always harmless to simply identify them
without specifying the isomorphism.
See Fig.~\ref{fig:c6-clustering} for all possible $\chi: H \rightarrow
	X_0\sqcup X$ for three heads and $G=C_6$ up to isomorphism.
We will sometimes simply write $\chi: H \rightarrow Y$ or just $\chi$ to refer to a
partial $G$-clustering, omitting the data specifying the $G$-action and
partition of $Y$ from the notation.

\begin{example}
	\label{ex:trivial-clustering}
	For any $G$ and $H$, there is always a $G$-clustering obtained by taking the
	identity map $\mathrm{Id}: H\rightarrow H$ together with the trivial
	$G$-action on $H$.
	This $G$-clustering will be denoted by $\mathrm{Id}$.
\end{example}

\begin{figure}[htbp]
	\centering
	\input{contents/appendix/clustering-diagrams.tex}
	\caption{All $33$ partial $C_6$-clusterings of $H=\{1,2,3\}$ up to isomorphism.}
	\label{fig:c6-clustering}
\end{figure}

We can now rephrase Theorem~\ref{thm:equiv-attn-char} in this slightly more
technical and condensed language, which will make our discussion in
Section~\ref{app:irrcomp-proof} easier. First, note that a partial
$G$-clustering $\chi: H \rightarrow X_0\sqcup X =: Y$ (in fact any surjective
map) induces two maps,
\begin{equation}
	\begin{aligned}
		 & \mathcal{P}_\chi: \mathrm{End}(V)^H \rightarrow \mathrm{End}(V)^Y           \\
		 & \quad(R_h)_{h\in H} \mapsto \Big(\sum_{h\in \chi^{-1}(p)} R_h\Big)_{p\in Y} \\
		 & \mathcal{P}_{\chi}^*: \mathrm{End}(V)^Y \rightarrow \mathrm{End}(V)^H       \\
		 & \quad (M_p)_{p\in Y} \mapsto (M_{\chi(h)})_{h\in H},
	\end{aligned}
\end{equation}
which are indeed adjoints of each other.

\begin{theorem}
	\label{thm:equiv-attn-char-2}
	Given $M, R \in \mathrm{End}(V)^H$,
	reduced multi-head self-attention $\mathrm{RMHSA}(-; M, R):
		\mathbb{R}^L\otimes V\rightarrow \mathbb{R}^L\otimes V$ for $L\ge 3$ tokens
	is $G$-equivariant
	if and only if there is a partial $G$-clustering $\chi: H\rightarrow
		X_0\sqcup X =: Y$ such that the following are true:
	\begin{itemize}
		\item There exists $(M_p)_{p\in Y}$ such that $\mathcal{P}^*_\chi((M_p)_{p\in Y}) = (M_h)_{h\in H}$
		      and $p \mapsto M_p$ is $G$-equivariant on $X$.
		\item Let $(R_p)_{p\in Y}:= \mathcal{P}_\chi((R_h)_{h\in H})$. Then $p\mapsto R_p$ is $G$-equivariant on $X$ and zero on $X_0$.
	\end{itemize}
\end{theorem}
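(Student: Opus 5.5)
Theorem~\ref{thm:equiv-attn-char-2} is a reformulation of Theorem~\ref{thm:equiv-attn-char} in the language of partial $G$-clusterings, so I plan to derive it from Theorem~\ref{thm:equiv-attn-char} and not re-run the analytic argument. Both statements fix the same $L\ge 3$, $G$ and $V$. Both quantify existentially over the same data: a surjection $\chi: H\to X_0\sqcup X$ with $X$ a $G$-set, which is exactly a partial $G$-clustering. So it suffices to show that, for a fixed partial $G$-clustering $\chi: H\to Y$, conditions (1)--(3) of Theorem~\ref{thm:equiv-attn-char} hold if and only if the two bullet conditions of Theorem~\ref{thm:equiv-attn-char-2} hold.

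The key step is the QK condition. Suppose (1) and (2) hold. For each $p\in Y$ I define $M_p := M_h$ for any $h\in\chi^{-1}(p)$. This uses that $\chi$ is surjective, so each fiber is nonempty, and it is well defined because $h\mapsto M_h$ is constant on fibers. By construction $\mathcal{P}^*_\chi((M_p)_{p\in Y}) = (M_{\chi(h)})_{h\in H} = (M_h)_{h\in H}$, and (2) says exactly that $p\mapsto M_p$ is $G$-equivariant on $X$.

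Conversely, suppose some $(M_p)_{p\in Y}$ satisfies $\mathcal{P}^*_\chi((M_p)) = (M_h)$. Then $M_h = M_{\chi(h)}$ for all $h$, which is constancy along fibers, i.e.\ (1). By surjectivity every $p$ equals $\chi(h)$ for some $h$, so $M_p$ is forced to coincide with the $M_p$ defined in Theorem~\ref{thm:equiv-attn-char}. Hence equivariance of $p\mapsto M_p$ on $X$ is precisely (2). The only subtlety here is that surjectivity is what makes the witness $(M_p)$ unique, so that ``there exists $(M_p)$ equivariant'' matches the condition on the canonically defined $M_p$. This is the step I would check most carefully, though I expect it to be immediate.

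The OV condition needs no argument beyond unfolding definitions. By definition $\mathcal{P}_\chi((R_h))_p = \sum_{h\in\chi^{-1}(p)}R_h$, which is the $R_p$ of condition (3), so the second bullet is literally (3). Combining the two equivalences with Theorem~\ref{thm:equiv-attn-char} yields Theorem~\ref{thm:equiv-attn-char-2} in both directions. There is no real obstacle: all analytic content, including the use of $L\ge 3$, is inherited from Theorem~\ref{thm:equiv-attn-char}.
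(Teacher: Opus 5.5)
Your proof is correct. Since $\chi$ is surjective, $\mathcal{P}^*_\chi$ is injective, so a witness $(M_p)_{p\in Y}$ exists exactly when $h\mapsto M_h$ is constant on fibers, and it then coincides with the $M_p$ of Theorem~\ref{thm:equiv-attn-char}. The OV bullet is literally condition (3).

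Your route differs from the paper's in where the work is placed. The paper announces the statement as a rephrasing of Theorem~\ref{thm:equiv-attn-char} and leaves that equivalence unchecked. Its actual proof in Section~\ref{app:proof} is written directly in the partial-clustering language:
\begin{itemize}
\item The $\Leftarrow$ direction is a change-of-variables computation.
\item The $\Rightarrow$ direction uses Lemma~\ref{lem:sigmoid-derivatives}, the nonvanishing of $\sigma_{L-1}^{(n)}(0)$, which is where $L\ge 3$ enters.
\item It then uses Lemma~\ref{lem:tensor-separation} to build $Y$ as the partition of $H$ by the values of $M_h$, sets $X_0=\{p:R_p=0\}$, and defines the action by $g\cdot p=\pi_g^{-1}(p)$.
\end{itemize}
It then declares Theorem~\ref{thm:equiv-attn-char} proved ``equivalently.''

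Your reduction supplies exactly the formal bookkeeping the paper omits, and inherits all analytic content from Theorem~\ref{thm:equiv-attn-char}. This is not circular, because the construction in Section~\ref{app:proof} verifies conditions (1)--(3) of Theorem~\ref{thm:equiv-attn-char} directly; the $M_p$ there are fiberwise constant by construction. You should still be aware that your argument adds no content beyond that earlier theorem: all the substance lives in the Section~\ref{app:proof} argument.
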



When the attention bilinear forms $M_h$ are pairwise distinct, a weaker version
of Theorem~\ref{thm:equiv-attn-char} can be obtained by combining the generic
identifiability result of \cite{henry_generic_2026} with the
framework of \cite{shahverdiIdentifiableEquivariantNetworks2026},
which builds on \cite{marchettiHarmonicsLearningUniversal2024}.
Theorem~\ref{thm:equiv-attn-char} is stronger in that it is a global statement:
it also describes nongeneric parameter configurations in which several heads
share the same attention bilinear form. In this case, the individual effective
value maps $R_h$ are not identifiable; only their sums over the corresponding
fibers are intrinsic, as noted by \cite{cirrincioneRankHeadChannelNonIdentifiability2026}, and required to
respect equivariance.
The global form is crucial to the proof of Theorem~\ref{thm:components}: it
gives a Zariski-closed cover of the equivariance locus (see
Section~\ref{app:irrcomp-proof}), and having control of every member of the cover
is necessary for proving maximality of the counted irreducible components.


\begin{example}
	Let $G= \mathbb{Z}/2\mathbb{Z} = \{e, t\}$ with two
	attention heads $H = \{1, 2\}$.
	If $M_1\neq M_2$ and $X_0 = \varnothing$, then $\chi$ is bijective and $G$
	acts on $H$.
	There are two possibilities:
	either (a) the action is trivial or (b) $t.1 = 2$ and $t.2 =1$.
	In case (a), $R_h$ and $M_h$ are in $\mathrm{End}_{\mathbb{Z}/2\mathbb{Z}}(V)$ for all $h\in H$.
	In case (b), $R_1$ and $R_2$ are related by $tR_1 t = R_2$, and similarly
	$tM_1t = M_2$.
\end{example}

\begin{example}
	For one head, i.e., $|H|=1$, and $X_0=\varnothing$, the equivariance conditions become
	$M, R \in \mathrm{End}_G(V)$, recovering Theorem~1 of
	\cite{ongUnifiedFrameworkVision2026}.
\end{example}

\begin{example}
	Consider an arbitrary finite group $G$. Take $X = G$, acted on by $G$ by left
	multiplication.
	In this case, the equivariance condition reads
	$gM_{g'}g^{-1} = M_{gg'}$ and similarly for the $R_g$. In particular, this
	means $M_g = gM_eg^{-1}$ and $R_g = gR_e g^{-1}$,
	with $M_e$ and $R_e$ unconstrained.
\end{example}

For a Lie group $G$, the connected component $G_0$ of the identity must act
trivially on the heads \footnote{Without loss of generality take $g = \exp(A) = \exp(A/|X|!)^{|X|!}$,
	which must act trivially because the order of every permutation divides $|X|!$.
} (more precisely on $X$).
That is, there is again a
surjective map $\chi: H \rightarrow X_0\sqcup X$, where $X$ carries a
$G$-action, for which the equivariance conditions in
Theorem~\ref{thm:equiv-attn-char} hold. However, $G_0$ acts on $X$ trivially,
so we equivalently have a $\mathcal{C}$-action, where $\mathcal{C}:= G/G_0$ is
the group of connected components. As a consequence, if $G$ is connected, each
head must be $G$-stable.


\begin{example}
	$G = \mathrm{SO}(3)$ 
	(or any other connected Lie group) is connected,
	so $\mathcal{C}$ is trivial, and we have $R_p, M_p \in \mathrm{End}_G(V)$ for
	any $p\in X$.
\end{example}

\begin{example}
	$G = \mathrm{O}(3)$ 
	has two connected components, so
	$\mathcal{C} \cong \mathbb{Z}/2\mathbb{Z} = \{e, t\}$. In this case, the
	$M_p$ and $R_p$ still commute with $\mathrm{SO}(3)$ for
	all $p\in X$, but it can happen that $t$ carries some $p \in X$ to $p'\neq p$,
	for which we have $M_{p'} = tM_p t$ and $R_{p'} = tR_p t$.
\end{example}

\section{Proof of Theorem \ref{thm:equiv-attn-char}}
\label{app:proof}

The $\Leftarrow$ direction is direct computation:
\begin{equation}
	\begin{aligned}
		 & \sum_{h\in H}
		\frac{\sum_{j=1}^L \exp(\braket{gx_i, M_h gx_j})R_h gx_j}{\sum_{j=1}^L\exp(\braket{gx_i, M_h gx_j})}
		=
		\sum_{p\in X}
		\frac{\sum_{j=1}^L \exp(\braket{gx_i, M_p gx_j})R_p gx_j}{\sum_{j=1}^L\exp(\braket{gx_i, M_p gx_j})} \\
		 & = \sum_{p\in X}
		\frac{\sum_{j=1}^L \exp(\braket{x_i, M_{g^{-1}p} x_j})gR_{g^{-1}p}x_j}{\sum_{j=1}^L\exp(\braket{x_i, M_{g^{-1}p} x_j})}
		= g\sum_{p\in X}
		\frac{\sum_{j=1}^L \exp(\braket{x_i, M_{p} x_j})R_{p}x_j}{\sum_{j=1}^L\exp(\braket{x_i, M_{p} x_j})} \\
		 & = g\sum_{h\in H}
		\frac{\sum_{j=1}^L \exp(\braket{x_i, M_h x_j})R_h x_j}{\sum_{j=1}^L\exp(\braket{x_i, M_h x_j})}.
	\end{aligned}
\end{equation}

We now prove $\Rightarrow$.

The equivariance condition for $L$ tokens $(x_1, \dotsc, x_L)\in \mathbb{R}^L \otimes V$
and $|H|$ heads $V=\bigoplus_{h\in H}V_h$ reads
\begin{equation}
	\sum_{h\in H} \frac{\sum_{j=1}^L\exp(\braket{gx_i, M_h gx_j})R_h gx_j}{\sum_{j=1}^L\exp(\braket{gx_i, M_h gx_j})}
	= g
	\sum_{h\in H} \frac{\sum_{j=1}^L\exp(\braket{x_i, M_h x_j})R_hx_j}{\sum_{j=1}^L\exp(\braket{x_i, M_h x_j})}
\end{equation}
for all $g\in G$.
Hence,
\begin{equation}
	\label{eq:equiv-attn-b}
	\begin{aligned}
		 & \sum_{h\in H} \frac{\sum_{j=1}^L\exp(\braket{x_i, M_h^g x_j})R_h^g x_j}
		{\sum_{j=1}^L\exp(\braket{x_i, M_h^gx_j})}
		=
		\sum_{h\in H} \frac{\sum_{j=1}^L\exp(\braket{x_i, M_h x_j})R_h x_j}{\sum_{j=1}^L\exp(\braket{x_i, M_h x_j})},
	\end{aligned}
\end{equation}
where $M_h^g := gM_hg^{-1}$ and $R_h^g := gR_h g^{-1}$. Setting all tokens to
the same $x\in V$ yields $g\sum_h R_hg^{-1} x = \sum_h R_h x$, or
$\sum_h R_h \in \mathrm{End}_G(V)$.
Now, set $x_{j\ge 2} = x$, $x_1 = x+y$, and $i=2$ in Eq.~\eqref{eq:equiv-attn-b} to get
\begin{equation}
	\sum_{h\in H} \frac{(L-1) R_h^g x + e^{\braket{x, M_h^g y}}  R_h^g(x+y)}{L-1 + e^{\braket{x, M_h^g y}}}
	= \sum_{h\in H} \frac{(L-1) R_h x + e^{\braket{x, M_h y}}  R_h(x+y)}{L-1 + e^{\braket{x, M_h y}}}.
\end{equation}
Cancelling out $\sum_h R_h^g x = \sum_h R_h x$ yields
\begin{equation}
	\sum_{h\in H} \frac{e^{\braket{x, M_h^g y}}  R_h^gy}{L-1 + e^{\braket{x, M_h^g y}}}
	= \sum_{h\in H} \frac{e^{\braket{x, M_h y}}  R_hy}{L-1 + e^{\braket{x, M_h y}}}.
\end{equation}
Or, writing $\sigma_\alpha(s):= e^s / (\alpha + e^s)$,
\begin{equation}
	\label{eq:equiv-sigmoid}
	\sum_{h\in H} \sigma_{L-1}(\braket{x, M_h^g y})R_h^gy
	= \sum_{h\in H} \sigma_{L-1}(\braket{x, M_hy})R_hy.
\end{equation}
Differentiating Eq.~\eqref{eq:equiv-sigmoid} $n$ times with respect to $x$ at $x=0$ yields
\begin{equation}
	\sigma_{L-1}^{(n)}(0)
	\sum_{h\in H} (M_h^g y)^{\otimes n} \otimes R_h^gy
	=  \sigma_{L-1}^{(n)}(0)\sum_{h\in H}(M_hy)^{\otimes n}\otimes R_hy.
\end{equation}
Here, the superscript $(n)$ denotes the $n$-th derivative of a function.

\begin{lemma}
	\label{lem:sigmoid-derivatives}
	If $\alpha$ is an integer larger than one,
	then $\sigma_\alpha^{(n)}(0)\neq 0$ for all $n \in \mathbb{N}_{\ge 0}$.
	\begin{proof}
		Let $\lambda := 1/\alpha < 1$. We can rewrite the sigmoid-like function as
		\begin{equation}
			\sigma_\alpha(x)
			= \lambda e^x \frac{1}{1+\lambda e^x}
			= \lambda e^x \sum_{b=0}^\infty (-1)^b \lambda^b e^{bx}
			= \sum_{b=1}^\infty (-1)^{b+1} \lambda^b e^{bx}.
		\end{equation}
		Thus
		\begin{equation}
			\sigma_\alpha^{(n)}(0)
			= \sum_{b=1}^\infty (-1)^{b+1} \lambda^b b^n,
		\end{equation}
		which can be related to the Eulerian polynomials $A_n$ for $n \ge 1$ (the claim is clearly true for $n=0$):
		\begin{equation}
			\sigma^{(n)}_\alpha(0)
			= \frac{\lambda A_n(-\lambda)}{(1+\lambda)^{n+1}}.
		\end{equation}
		So $\sigma_{\alpha}^{(n)}(0) = 0\Leftrightarrow A_n(-\lambda)=0$.
		But $A_n$ is a monic polynomial with constant term $1$,
		so by the rational root theorem, the only possible rational roots are $\pm 1$.
	\end{proof}
\end{lemma}

Thus, if the number of tokens is at least three, then
for all $n \ge 0$ we have
\begin{equation}
	\sum_{h\in H}(M_h^gy)^{\otimes n} \otimes R_h^g y
	=  \sum_{h\in H}(M_hy)^{\otimes n} \otimes R_hy,
\end{equation}
or
\begin{equation}
	\label{eq:attn-matrix-equivariance-tensor}
	\sum_{h\in H}(M_hy)^{\otimes n} \otimes R_hy
	+ \sum_{h\in H}(M_h^gy)^{\otimes n} \otimes (-R_h^g y)
	= 0.
\end{equation}

\begin{lemma}
	\label{lem:tensor-separation}
	Let $V$ be a finite dimensional real vector space and let
	$(R_i)_{i\in I}, (M_i)_{i\in I}$
	be finite indexed families in $\mathrm{End}(V)$.
	Suppose
	\begin{equation}
		\label{eq:tensor-product-condition}
		\forall n \ge 0, \forall y\in V:
		\sum_{i\in I} (M_iy)^{\otimes n}\otimes R_i y= 0.
	\end{equation}
	Partition $I = \bigsqcup_{p\in Z} C_p$
	such that $M_i = M_j \Leftrightarrow \exists p:i,j \in C_p$.
	Then $\sum_{i\in C_p}R_i = 0$ for all $p\in Z$.
	\begin{proof}
		Roughly speaking, the proof idea is to first prove the claim pointwise
		and then employ a density argument.

		Let
		\begin{equation}
			\begin{aligned}
				S := & \{
				y \in V:
				\forall i,j: M_i y = M_j y \Leftrightarrow M_i = M_j
				\}             \\
				     & \cap \{
				y\in V: \forall i: M_i \neq 0 \Rightarrow M_iy \neq 0
				\}
			\end{aligned}
		\end{equation}
		Then $S$ is nonempty and open.

		Fix $y\in S$.
		Find $\alpha \in V^*$ that separates the $M_iy$,
		i.e. $\alpha(M_iy) = \alpha(M_jy)\Leftrightarrow M_i y = M_jy$.
		Furthermore, we may assume $\alpha(M_iy)=0\Leftrightarrow M_iy=0$
		($\Leftrightarrow M_i=0$ because $y\in S$).
		Define $c_p := \alpha(M_iy)$ for any $i\in C_p$.
		Then, $\alpha^{\otimes n}\otimes \mathrm{Id}$ evaluated on
		Eq.~\eqref{eq:tensor-product-condition} reads
		\begin{equation}
			\forall n \ge 0: \sum_{p\in Z}c_p^n \sum_{i\in C_p}R_iy = 0.
		\end{equation}
		First, suppose $c_p\neq 0$ for all $p\in Z$. Then
		the $c_p$ are nonzero and pairwise distinct.
		Thus, the matrix with entries $\mathbf{V}_{np} = c_p^n$, $n=0,1, \dotsc,
			|Z|-1$ is invertible, so $\sum_{i\in C_p}R_i y=0$ for all $p\in
			Z$ and $y\in S$.

		If $c_{p_0} = 0$ for some $p_0\in Z$ ($\Leftrightarrow M_i=0$ for some
		$i$), then the same argument still implies $\sum_{i\in C_p}R_i y =0$ for
		$p\in Z \setminus \{p_0\}$ and $y\in S$, which in turn implies $\sum_{i\in
				C_{p_0}}R_i y=0$ since $\sum_{i\in I}R_iy = 0$ (Eq.~\eqref{eq:tensor-product-condition} at $n=0$).

		Hence, $\sum_{i\in C_p}R_i y = 0$ for all $p\in Z$ and $y\in S$.
		Since $S$ is nonempty and open, $\sum_{i\in C_p}R_i = 0$.
	\end{proof}
\end{lemma}

Applied to Eq.~\eqref{eq:attn-matrix-equivariance-tensor}, the summation index
set is $I = H\sqcup H = \bigsqcup_{p\in Z} C_p$,
partitioned by the values of $(M_h)_{h\in H}$ and $(M_h^g)_{h\in H}$.
Partition $H = \bigsqcup_{p\in Y} B_p$ by $(M_h)_{h\in H}$, and write $M_p :=
	M_h$ for any $h\in B_p$ and $R_p := \sum_{h\in B_p}R_h$.
Clearly, we have $M_h^g = M_{h'}^g\Leftrightarrow$
$h, h'$ are in some $B_p$. That is, the partition of $H\sqcup H$ must be coarser
than $\left(\bigsqcup_{p\in Y} B_p\right)\sqcup \left(\bigsqcup_{p\in Y}B_p\right)$.
Define
\begin{equation}
	X_0 := \left\{p\in Y: R_p = 0\right\}\qquad X := Y \setminus X_0.
\end{equation}

Define a map $\pi_g: X \rightarrow X$ as follows: Take $p\in X$. Then
$R_p \neq 0$ by construction. So, by Lemma~\ref{lem:tensor-separation},
there must be exactly one $p'\in Y$ such that $M_p = M_{p'}^g$ and $R_p = R_{p'}^g$.
But then $R_{p'}\neq 0$, so $p'$ must belong to $X$.
Put $\pi_g(p) = p'$. If $\pi_g(p_1) = \pi_g(p_2)$, then $M_{p_1} = M_{p_2}$, so $\pi_g$ is bijective.

Next, we show that $g \mapsto \pi_g$ is an antihomomorphism. Take $g_1, g_2\in G$ and $p\in X$.
Let $p' = \pi_{g_1}(p)$, $p'' = \pi_{g_2}\circ\pi_{g_1}(p) = \pi_{g_2}(p')$.
Then $M_p = M_{p'}^{g_1}$ and $M_{p'} = M_{p''}^{g_2}$, so
$M_p = M_{p''}^{g_1g_2}$, i.e., $p'' = \pi_{g_1g_2}(p)$.
That is, $\pi_{g_1g_2} = \pi_{g_2}\circ \pi_{g_1}$.

Let $G$ act on $X$ by $g\cdot p = \pi_g^{-1}(p)$. Then $M_{g\cdot p} =
	M_{\pi_g^{-1}(p)} = M_{\pi_g\circ \pi_g^{-1}(p)}^{g} = M_p^g$, and similarly
$R_{gp} = R_p^g$. So $p \mapsto M_p$ and $p \mapsto R_p$ are $G$-equivariant on
$X$.
This concludes the proof of Theorem~\ref{thm:equiv-attn-char-2},
equivalently Theorem~\ref{thm:equiv-attn-char},
where the partial $G$-clustering is given by the natural surjection $\chi: H\rightarrow Y$
giving the partition.
This last argument is illustrated in Fig.~\ref{fig:group-action-argument}.

\begin{figure}
%
%

\begingroup

\def\lgaScale{.7}         
\def\lgaMathScale{1.2}     
\def\lgaTriBeltScale{1.32} 

\def\lgaCapsule#1#2#3#4#5{%
	\begin{scope}[shift={(#1,#2)}, rotate=#3]
		\draw[lga belt]
		(-#4,#5)
		arc[start angle=90,end angle=270,radius=#5]
		-- (#4,-#5)
		arc[start angle=-90,end angle=90,radius=#5]
		-- cycle;
	\end{scope}%
}

\def\lgaTriBelt#1#2{%
	\begin{scope}[shift={(#1,#2)}, scale=\lgaTriBeltScale]
		\draw[lga belt]
		(0.121,0.470)
		arc[start angle=30,end angle=150,radius=0.14]
		-- (-0.481,-0.150)
		arc[start angle=150,end angle=270,radius=0.14]
		-- (0.360,-0.360)
		arc[start angle=270,end angle=390,radius=0.14]
		-- cycle;
	\end{scope}%
}

\begin{tikzpicture}[
	scale=\lgaScale,
	every node/.style={transform shape},
	lga label/.style={scale=\lgaMathScale},
	lga head/.style={
			circle, draw=black, fill=black,
			minimum size=4.8pt, inner sep=0pt, line width=0.7pt
		},
	lga belt/.style={line width=0.85pt},
	lga action/.style={
	line width=0.7pt,
	-{Stealth[length=2.2mm, width=1.8mm]}
	}
	]

	\node[lga label] at (1.95, 1.35) {$X$ \; ($R_p \neq 0$)};
	\node[lga label] at (7.35, 1.35) {$X_0$ \; ($R_p = 0$)};

	\draw[dashed] (5.25, 0.95) -- (5.25, -3.9);

	\node[lga head] at (0, 0) {};
	\lgaCapsule{0}{0}{0}{0}{0.17}
	\node[lga head] at (1.8, 0.51)   {};
	\node[lga head] at (1.32, -0.27) {};
	\node[lga head] at (2.28, -0.27) {};
	\lgaTriBelt{1.8}{0.03}
	\node[lga head] at (3.42, 0) {};
	\node[lga head] at (4.38, 0) {};
	\lgaCapsule{3.9}{0}{0}{0.48}{0.17}
	\node[lga head] at (6.12, 0) {};
	\node[lga head] at (7.08, 0) {};
	\lgaCapsule{6.6}{0}{0}{0.48}{0.17}
	\node[lga head] at (8.1, 0) {};
	\lgaCapsule{8.1}{0}{0}{0}{0.17}

	\node[lga head] at (0, -2.9) {};
	\lgaCapsule{0}{-2.9}{0}{0}{0.17}
	\node[lga head] at (1.8, -2.39)  {};
	\node[lga head] at (1.32, -3.17) {};
	\node[lga head] at (2.28, -3.17) {};
	\lgaTriBelt{1.8}{-2.87}
	\node[lga head] at (3.42, -2.9) {};
	\node[lga head] at (4.38, -2.9) {};
	\lgaCapsule{3.9}{-2.9}{0}{0.48}{0.17}
	\node[lga head] at (6.12, -2.9) {};
	\node[lga head] at (7.08, -2.9) {};
	\lgaCapsule{6.6}{-2.9}{0}{0.48}{0.17}
	\node[lga head] at (8.1, -2.9) {};
	\lgaCapsule{8.1}{-2.9}{0}{0}{0.17}

	\draw[lga action] (1.8, -0.80) -- (0,   -1.90);  
	\draw[lga action] (3.9, -0.80) -- (1.8, -1.90);  
	\draw[lga action] (0,   -0.80) -- (3.9, -1.90);  
	\node[lga label] at (-0.75, -1.35) {$\pi_g$};


\end{tikzpicture}

\endgroup
	\caption{Illustration of the argument that applies Lemma~\ref{lem:tensor-separation} to
		Eq.~\eqref{eq:attn-matrix-equivariance-tensor} to get a partial $G$-clustering.
	}
	\label{fig:group-action-argument}
\end{figure}

Finally, we would like to remark that the same technique can be used to prove
the following identifiability result, which slightly generalizes
\cite{henry_generic_2026}:
\begin{theorem}
	\label{thm:identifiability}
	Suppose $M,R\in \mathrm{End}(V)^{H}$ and $\bar M, \bar R\in \mathrm{End}(V)^{\bar H}$ satisfy
	\begin{equation}
		\mathrm{RMHSA}(-; M, R) = \mathrm{RMHSA}(-;\bar M,\bar R):
		\mathbb{R}^L\otimes V \rightarrow \mathbb{R}^L\otimes V
	\end{equation}
	for some $L \ge 3$.
	Then
	\begin{equation}
		\sum_{h: M_h= m} R_h = \sum_{\bar h: \bar M_{\bar h} = m} \bar R_{\bar h}
	\end{equation}
	for any $m\in \mathrm{End}(V)$.
\end{theorem}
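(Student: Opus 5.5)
The plan is to rerun the argument used for Theorem~\ref{thm:equiv-attn-char}, but now comparing two parameter families instead of one family with its $g$-conjugate. Concretely, we substitute special token configurations into the identity $\mathrm{RMHSA}(x;M,R)=\mathrm{RMHSA}(x;\bar M,\bar R)$, reduce it to a family of tensor identities, and then invoke Lemma~\ref{lem:tensor-separation} on the disjoint union index set $H\sqcup\bar H$.

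First we set all $L$ tokens equal to the same $x\in V$. Every softmax weight is then $1/L$, so the equality becomes
\begin{equation*}
\sum_{h\in H} R_h x=\sum_{\bar h\in\bar H}\bar R_{\bar h}x \quad\text{for all } x\in V.
\end{equation*}
Next we take $x_{j\ge2}=x$, $x_1=x+y$ and read off the output at position $i=2$. After subtracting the common term $\sum_h R_h x=\sum_{\bar h}\bar R_{\bar h}x$, we obtain, exactly as in the derivation of Eq.~\eqref{eq:equiv-sigmoid},
\begin{equation*}
\sum_{h\in H}\sigma_{L-1}(\langle x, M_h y\rangle)R_h y=\sum_{\bar h\in\bar H}\sigma_{L-1}(\langle x,\bar M_{\bar h} y\rangle)\bar R_{\bar h} y .
\end{equation*}
We differentiate this $n$ times in $x$ at $x=0$. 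Since $L\ge3$, we have $L-1\ge2$, so Lemma~\ref{lem:sigmoid-derivatives} guarantees $\sigma_{L-1}^{(n)}(0)\neq0$ and we may divide by it. This yields, for all $n\ge0$ and $y\in V$,
\begin{equation*}
\sum_{h\in H}(M_hy)^{\otimes n}\otimes R_hy+\sum_{\bar h\in\bar H}(\bar M_{\bar h}y)^{\otimes n}\otimes(-\bar R_{\bar h}y)=0 .
\end{equation*}
The case $n=0$ is the first identity above.

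We now apply Lemma~\ref{lem:tensor-separation} with index set $I=H\sqcup\bar H$ and families $(M_h)\sqcup(\bar M_{\bar h})$ and $(R_h)\sqcup(-\bar R_{\bar h})$. The lemma's partition groups indices by equal values of the $M$'s, so each class $C_p$ is precisely the set of indices whose $M$-value equals some fixed $m$. The lemma then says that for each such $m$,
\begin{equation*}
\sum_{h:M_h=m}R_h-\sum_{\bar h:\bar M_{\bar h}=m}\bar R_{\bar h}=0,
\end{equation*}
which is the claimed identity. If $m$ does not occur among the $M_h$ or the $\bar M_{\bar h}$ at all, both sides are empty sums and the statement holds trivially.

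There is no genuine obstacle here, since the heavy lifting is done by the two earlier lemmas. The only point needing care is bookkeeping. Lemma~\ref{lem:tensor-separation} allows arbitrary, possibly repeated or zero, matrices in the family, so heads in $H$ and $\bar H$ sharing an $M$-value are correctly merged into one class. Likewise, the class of the zero matrix (if $m=0$ occurs) is covered by the lemma's separate treatment of $c_{p_0}=0$ through the $n=0$ identity. Finally, Lemma~\ref{lem:sigmoid-derivatives} needs $\alpha=L-1$ to be an integer greater than one, and this is exactly what the hypothesis $L\ge3$ supplies.
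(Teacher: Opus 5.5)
Your proof is correct and is the argument the paper has in mind when it says the same technique proves this result. You replace the conjugated family $(M_h^g, R_h^g)$ by $(\bar M_{\bar h}, \bar R_{\bar h})$ and obtain the tensor identities via the $x_1 = x+y$ substitution and Lemma~\ref{lem:sigmoid-derivatives}; you then apply Lemma~\ref{lem:tensor-separation} on $H\sqcup\bar H$ with values $(R_h)\sqcup(-\bar R_{\bar h})$.
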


\section{No Single Equivariant Architecture Covers All Equivariant MHSAs}
\label{sec:implications}

We now discuss our second main result, built on Theorem~\ref{thm:equiv-attn-char}, which is,
roughly speaking: ``\textit{An unconstrained MHSA can represent many more
	$G$-equivariant functions than a polynomially parameterized MHSA with exact
	$G$-equivariance.}'' We will quantify this lack of expressivity by counting how
many polynomially parameterized models are needed in order to represent all
$G$-equivariant MHSAs. This motivates the following definition:

\begin{definition}
	Let $V$ be an inner product space with orthogonal decomposition $V =
		\bigoplus_{h\in H}V_h$.
	Let $\Lambda$ be any real vector space (parameter space), and let
	$\beta: \Lambda\rightarrow \mathrm{End}(V)^4$
	be polynomial. 
	We define the
	\textbf{($\beta$-)polynomially parameterized MHSA} to be the parameterized map
	\begin{equation}
		\begin{aligned}
			\beta\text{\rm-MHSA}: (\mathbb{R}^L \otimes V) \times  \Lambda & \rightarrow (\mathbb{R}^L \otimes V)      \\
			(x, \lambda)
			                                                               & \mapsto \mathrm{MHSA}(x; \beta(\lambda)).
		\end{aligned}
	\end{equation}
	If $V$ is an orthogonal $G$-representation, a
	family $\mathcal{M} = (\beta\text{\rm-MHSA})_{\beta\in \mathbf{B}}$
	is \textbf{$G$-complete} if each $\beta\text{\rm-MHSA}\in \mathcal{M}$
	represents only $G$-equivariant maps
	and any $G$-equivariant map expressible by 
	MHSA can be expressed by $\beta\text{\rm-MHSA}$ for some
	$\beta\in\mathbf{B}$.
\end{definition}

Let $c_{G,d}(V)$ denote the number of maximal subrepresentations $W\subset V$
such that $0 < \dim W \le d$ up to isomorphism.
For example, if $G= C_4$ and $V=\mathrm{A}_1\oplus \mathrm{A}_2 \oplus \mathrm{E}_1$, then
there are two such subrepresentations: $\mathrm{A}_1\oplus \mathrm{A}_2$ and $\mathrm{E}_1$,
so $c_{C_4, 2}(\mathrm{A}_1\oplus \mathrm{A}_2 \oplus \mathrm{E}_1) = 2$.

\begin{theorem}
	\label{thm:components}
	Suppose $G$ is finite.
	A $G$-complete family of polynomially parameterized MHSA
	with equal head dimensions $d = \dim V / |H|$
	necessarily contains at least
	$\frac{1}{|H|!}c_{G,d}(V)^{2|H|}$
	elements (models).
\end{theorem}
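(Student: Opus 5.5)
The strategy is to work in the reduced parameter space. Let $\mathcal{E}\subset \mathrm{End}(V)^H\times \mathrm{End}(V)^H$ be the set of pairs $(M,R)$ that are realizable by MHSA with head dimension $d$ and make $\mathrm{RMHSA}(-;M,R)$ $G$-equivariant. Realizability means that $M_h=\phi_q^*\pi_h\phi_k$ and $R_h=\phi_o\pi_h\phi_v$, so each $M_h$ and $R_h$ has rank at most $d$.

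Fix a $G$-complete family $(\beta\text{-MHSA})_{\beta\in\mathbf{B}}$. Each $\beta$ composed with the reduction map $(\phi_q,\phi_k,\phi_v,\phi_o)\mapsto (M,R)$ is a polynomial map $\Lambda\to\mathrm{End}(V)^{2H}$. Its image is irreducible, since it is the image of the irreducible affine space $\Lambda$, and the same holds for its Zariski closure $\overline{I_\beta}$. The image lies in $\mathcal{E}$ because every $\beta$-model is equivariant, so $\overline{I_\beta}\subset\overline{\mathcal{E}}$.

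The point to be careful about is that completeness is stated at the level of functions, not parameters. Theorem~\ref{thm:identifiability} handles this: two parameter pairs give the same function iff they have the same fiber-sums of $R$ over equal values of $M$. I will therefore pass to a quotient or normal form, or alternatively choose the counted components so that on a dense open subset the $M_h$ are pairwise distinct. On that subset the function determines $(M,R)$ up to permuting $H$. This permutation ambiguity is the source of the factor $1/|H|!$ in the bound.

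The covering argument then runs as follows. For $\mathcal{E}_{\mathrm{gen}}$, the part of $\mathcal{E}$ where the $M_h$ are distinct, we get
\[
\mathcal{E}_{\mathrm{gen}}\subset \bigcup_{\beta}\bigcup_{s\in S_H} s\cdot\overline{I_\beta}.
\]
An irreducible component $Z$ of $\overline{\mathcal{E}}$ that meets $\mathcal{E}_{\mathrm{gen}}$ densely must be contained in a single $s\cdot\overline{I_\beta}$. That set is irreducible and lies in $\overline{\mathcal{E}}$, so by maximality $Z=s\cdot \overline{I_\beta}$. Each $\beta$ therefore accounts for at most $|H|!$ components, and $|\mathbf{B}|\ge N/|H|!$, where $N$ is the number of such components. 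If $\mathbf{B}$ is infinite there is nothing to prove. I also need $\mathcal{E}$ to be covered by finitely many Zariski-closed sets, so that "components" makes sense. Theorem~\ref{thm:equiv-attn-char-2} gives this: for each of the finitely many partial $G$-clusterings $\chi$, the conditions are polynomial, namely linear equations together with rank$\le d$ minors.

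It remains to exhibit $c_{G,d}(V)^{2|H|}$ components, which is the main obstacle. I would use the trivial clustering $\mathrm{Id}$ of Example~\ref{ex:trivial-clustering}. Its locus requires every $M_h,R_h\in\mathrm{End}_G(V)$ with rank at most $d$. A $G$-equivariant endomorphism of rank at most $d$ factors through a subrepresentation of dimension at most $d$, so the locus for each single matrix is a union, over maximal $W$ with $\dim W\le d$ taken up to isomorphism, of the closed sets of equivariant maps whose image (and coimage) is of isomorphism type contained in $W$. These sets are irreducible, since each is the closure of the image of $\mathrm{Hom}_G(V,W)\times\mathrm{Hom}_G(W,V)$. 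Taking a product over the $2|H|$ matrices gives $c_{G,d}(V)^{2|H|}$ irreducible closed sets $Z_{(W)}$, with generic points having distinct $M_h$.

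To prove that each $Z_{(W)}$ is a maximal irreducible subset of $\overline{\mathcal{E}}$, I would show that it is not contained in any other member of the finite cover. Distinct choices of $W$ are incomparable because the isotypic types are incomparable by maximality. Loci coming from nontrivial clusterings $\chi$ cannot contain $Z_{(W)}$: their generic points either force some $M_h$ to be equal or force $M_{gp}=gM_pg^{-1}$ with $gp\ne p$. For a generic point of $Z_{(W)}$ the second condition would give $M_p=M_{gp}$, since $M_p$ is equivariant, so it collapses to the first, and a generic point has distinct $M_h$. A clustering with $X_0\neq\varnothing$ would force some fiber-sum of $R$ to vanish, which fails generically.

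This incomparability check is where I expect the real work to be. In particular it requires the rank-stratification argument to show that $Z_{(W)}$ is not properly contained in a larger irreducible subset of the $\mathrm{Id}$-locus itself. The obstruction there is the isomorphism type of the image subrepresentation, which is constant on open dense sets, so one $Z_{(W)}$ cannot be contained in another.
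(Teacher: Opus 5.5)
Your proposal is correct. Its skeleton matches the paper's: a finite closed cover of $\mathcal{E}$ from Theorem~\ref{thm:equiv-attn-char-2}, $c_{G,d}(V)^{2|H|}$ components from the trivial clustering, irreducibility of architecture images, and generic identifiability supplying the $|H|!$. The final covering step, however, is organized differently.

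The paper first places each $\mathcal{E}^\beta$ inside one closed piece $\mathcal{E}^{\chi,E,E'}$ (Corollary~\ref{cor:polynomial-only-one-component}). It then proves Proposition~\ref{prop:containment-control}. There, identifiability is also used to rule out every covering piece with $\chi^k\not\cong\mathrm{Id}$. Maximality of $E,E'$ among $d$-rank-bounded subbundles then upgrades $E\hookrightarrow\tilde E^{k,\nu}$ to an isomorphism.

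You instead use two facts. First, each $Z_{(W)}$ is an irreducible component of the closed set $\mathcal{E}$; this is part (3) of the proposition in Section~\ref{sec:step1-equivariance-locus}, proved by the same case analysis you sketch. Second, $s\cdot\overline{I_\beta}$ is an irreducible subset of $\mathcal{E}$. Together these turn containment into equality $Z=s\cdot\overline{I_\beta}$ directly. This avoids analyzing which clusterings can occur in a cover. It also yields the slightly stronger fact that each counted component is the closure of a permuted architecture image. The paper's final count, conversely, never invokes the component property, only maximality of subbundles.

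In your version, the equal-head-dimension hypothesis enters through the $\mathrm{Aut}(H)$-invariance of $\mathcal{E}$, which you need for $s\cdot\overline{I_\beta}\subset\mathcal{E}$. You should say so explicitly.

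One correction is needed. Pairwise distinct $M_h$ alone do not make the function determine $(M,R)$ up to a permutation of $H$. If $R_{h_0}=0$, then $M_{h_0}$ is invisible to $\mathfrak{R}$ and can be changed freely. Put $R_h\neq 0$ for all $h$ into the definition of $\mathcal{E}_{\mathrm{gen}}$, as the paper does via $\mathcal{Q}'$. Theorem~\ref{thm:identifiability} (you only need its stated ``only if'' direction) then gives a bijection $\nu$ with $\tilde M_{\nu(h)}=M_h$ and $\tilde R_{\nu(h)}=R_h$. Moreover, $Z_{(W)}\cap\mathcal{E}_{\mathrm{gen}}$ remains dense in $Z_{(W)}$, because every $R$-factor of $W$ is a nonzero subrepresentation.
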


The proof (Section~\ref{app:irrcomp-proof}) involves understanding the
Zariski-irreducible components of the equivariance locus in the RMHSA parameter space,
which we consider to be part of the neuroalgebraic geometry program \citep{marchettiAlgebraUnveilsDeep2025}.
These components have the same algebraic origin as the ones in equivariant
linear networks \citep{kohnGeometryLinearNeural2025}.
The lower bound is obtained by counting only components corresponding to the trivial action
on the heads, and including nontrivial actions could only increase the bound.

\begin{example}
	Let $G$ be the trivial group acting on $V = \mathbb{R}^C$, $C >0$.
	With $d = \dim V / |H| = C/|H|$, we have
	$c_{G, d}(V) = 1$, and the lower bound is $1/|H|!$, which is vacuous.
\end{example}

\begin{example}
	Take $G=D_4$, $V = \mathbb{R}^{C}\otimes \mathbb{R}[D_4] \cong
		\mathbb{R}^{C}\otimes \mathbb{R}^8$, where $\mathbb{R}[D_4]$ is the regular
	representation. With eight equal-dimension heads, we have $d = 8C/ 8 = C$,
	and $c_{D_4, C}(V)$ is the number of $C$-dimensional subrepresentations up to isomorphism.
	So
	\begin{equation}
		c_{D_4, C}(\mathbb{R}^C\otimes \mathbb{R}[D_4])
		= {C+3 \choose 3} + {C+1\choose 3} + \dotsb = \Theta(C^4),
	\end{equation}
	so the bound $|H|!^{-1} c_{D_4, C}(V)^{2|H|}$ is $\Theta(C^{64})$.
	For even $C$, this lower bound is
	$$
		\frac{1}{8!}\left[\frac{(C+2)(C+4)(C^2+6C+6)}{48}\right]^{16}.
	$$
	For $C=96$ (total token dimension $=8C = 768$), this is about $1.6\times
		10^{96}$.
\end{example}

\section{Geometry of the Equivariance Locus}
\label{app:irrcomp-proof}

The proof is split into three sections.
Section~\ref{sec:equivariant-bundle-motivation} merely motivates the machinery
we use; the actual proof is completely contained in
Sections~\ref{sec:proof-notations}, \ref{sec:step1-equivariance-locus}, and \ref{sec:step2-function-space}.

\subsection{Motivation: Equivariant Vector Bundles}
\label{sec:equivariant-bundle-motivation}

In the proof, we will make heavy use of \textit{equivariant vector bundles},
the definition and properties of which are reviewed in
Appendix~\ref{app:equiv-bundle}.
This is but a convenient way to pack a lot of
information into one single object, motivated in three steps as follows:
\begin{enumerate}[itemsep=4pt, topsep=5pt, parsep=2pt]
	\item[(1)] Let $V, W$ be $G$-representations. Suppose we want to study the family $\mathcal{F}$ of $G$-equivariant linear maps
	      $M\in \mathrm{Hom}_G(V, W)$ with $\mathrm{rk}(M) \le r$.
	      It is true that any such $M$ factorizes through
	      some $G$-representation $U$ of dimension $\le r$. That is,
	      $M = \psi_1\psi_2$ with $\psi_2\in \mathrm{Hom}_G(V, U)$ and $\psi_1\in \mathrm{Hom}_G(U, W)$.
	      Thus, $\mathcal{F} = \bigcup_{U} \mathcal{D}(V, W; U)$, where
	      $\mathcal{D}(V, W;U):= \mathrm{Hom}_G(U, W)\mathrm{Hom}_G(V, U)$,
	      and $U$ ranges over all $G$-representations of dimension at most $r$ up to isomorphism.
	\item[(2)] Fix a tuple of integers $(r_h)_{h\in H}$, one for each head.
	      Suppose we now want to study the family $\mathcal{F}$ of tuples $M \in \mathrm{Hom}_G(V,
		      W)^H$ such that each $M_h$ has rank at most $r_h$. The same argument in (1)
	      applies headwise, so, for each $M \in \mathcal{F}$, we can find $(U_h)_{h\in H}$, where
	      each $U_h$ is a $G$-representation of dimension at most $r_h$, such
	      that $M_h \in \mathcal{D}(V, W; U_h)$.
	      Now, pack the $U_h$ into $E:= (U_h)_{h\in H}$, and define
	      $\Gamma(E):=\bigoplus_{h\in H}U_h$, which is a $G$-representation.
	      Let $\mathcal{D}(V, W; E)$ be the set of tuples $M \in
		      \mathrm{Hom}_G(V, W)^H$ such that there exists $\phi_2 \in
		      \mathrm{Hom}_G(V, \Gamma(E)), \phi_1\in \mathrm{Hom}_G(\Gamma(E), W)$
	      satisfying $M_h = \phi_1\mathrm{ev}_h \phi_2$, where $\mathrm{ev}_h:
		      \Gamma(E)\rightarrow \Gamma(E)$ projects onto the $h$-th summand.
	      Hence, we have $\mathcal{F} = \bigcup_{E}\mathcal{D}(V, W; E)$. Nothing
	      fancy is done: we simply repackaged the $U_h$ into one object, $E$,
	      which is an equivariant bundle with trivial action on the base space
	      $H$.
	      This could seem somewhat ad-hoc, since each fiber $U_h$ is really just an
	      independent $G$-representation. In particular $\mathcal{D}(V, W; E) =
		      \prod_{h\in H}\mathcal{D}(V, W; U_h)$, and $\mathcal{F}$ is also a product.
	\item[(3)]
	      Finally, let $H = G/K$ be a homogeneous space, and let $[e]$ ($= K$) denote the identity coset of $K$.
	      Suppose we want to study the family $\mathcal{F}$ of tuples $M\in \mathrm{Hom}(V, W)^H$
	      such that $h\mapsto M_h$ is $G$-equivariant and $\mathrm{rk}(M_h) \le r_h$ for each $h$.
	      That is, $M_{gh} = gM_hg^{-1}$, which is exactly the equivariance condition
	      in Theorem~\ref{thm:equiv-attn-char}.
	      The family $\mathcal{F}$ is not a product over $H$ anymore because $M_{[e]}$
	      determines all the other $M_{h}$ for $h\in G/K$.
	      However, (1) still applies to $M_{[e]}$ for the subgroup $K$:
	      because $M_{[e]} = M_{[ke]} = kM_{[e]}k^{-1}$, we have $M_{[e]} \in
		      \mathrm{Hom}_K(\mathrm{Res}^G_K(V),\mathrm{Res}^G_K(W))$,
	      so there is a $K$-representation of dimension at most $r_{[e]}$
	      such that
	      \begin{equation}
		      \label{eq:homog-basepoint-K-equivariant}
		      M_{[e]} \in
		      \mathrm{Hom}_K(U, \mathrm{Res}^G_K(W))\mathrm{Hom}_K(\mathrm{Res}^G_K(V),U).
	      \end{equation}

	      By Frobenius reciprocity,
	      \begin{equation}
		      \begin{aligned}
			       & \mathrm{Hom}_K(U, \mathrm{Res}_K^G(W))\cong \mathrm{Hom}_G(\mathrm{Ind}_K^G(U), W) \\
			       & \mathrm{Hom}_K(\mathrm{Res}^G_K(V),U) \cong \mathrm{Hom}_G(V, \mathrm{Ind}_K^G(U))
		      \end{aligned}
	      \end{equation}
	      canonically, so
	      any decomposition $M_{[e]} = \psi_1\psi_2$ with respect to
	      Eq.~\eqref{eq:homog-basepoint-K-equivariant}
	      gives two $G$-equivariant maps: $\phi_1\in\mathrm{Hom}_G(\mathrm{Ind}_K^G(U), W)$
	      and
	      $\phi_2\in\mathrm{Hom}_G(V, \mathrm{Ind}_K^G(U))$.
	      Now, $\mathrm{Ind}_K^G(U)$ is nothing but the space of sections $\Gamma(E)$ of the associated
	      bundle $E = G \times_K U$ (where we view $G \rightarrow G/K$ as a principal $K$-bundle),
	      and it is easily shown (see Appendix~\ref{app:equiv-bundle}) that
	      the entire tuple $(M_h)_h$ is given by
	      $M_h = \phi_1 \mathrm{ev}_h \phi_2$, where $\mathrm{ev}_h: \Gamma(E)\rightarrow \Gamma(E)$
	      again projects onto the $h$-th fiber.
	      The upshot is that $\mathcal{F} = \bigcup_E \mathcal{D}(V, W;E)$ still holds
	      with the same definition of $\mathcal{D}(V, W; E)$,
	      but now the union is taken over all
	      $G$-equivariant vector bundles over $H = G/K$, which are in bijection with $K$-representations
	      via $U \mapsto G\times_K U$.
	      In this case, the rank constraint is equivalent to $\mathrm{rk}_h(E) := \dim
		      (E_h) = \dim U \le r_h$.

\end{enumerate}

In the context of Theorem~\ref{thm:equiv-attn-char}, $V=W$ and we are given a
partial $G$-clustering $\chi: H \rightarrow X \sqcup X_0$, and the objective is
to study the family of tuples $(M_p)_{p\in X} \in \mathrm{End}(V)^X$ satisfying
$M_{gp} = gM_pg^{-1}$ and $\mathrm{rank}(M_p) \le \dim V_h$ for all $h\in
	\chi^{-1}(p)$ (and similarly for the $R_p$ with slightly different rank
constraints). If $\chi = \mathrm{Id}$ (which means a trivial action on $H$ with
$X_0=\varnothing$, see Example~\ref{ex:trivial-clustering}), then we are in
case (2). In the general case, decompose $X$ into $G$-orbits, and apply (3) to
each orbit $O\subset X$ to get an equivariant bundle $E_O$, which can then be
assembled into an equivariant bundle $E$ over $X$, where the local ranks can
depend on the orbit.

\subsection{Step 0: Setup and Notations}
\label{sec:proof-notations}

We will start by fixing some notations.
Recall the setting: the token feature
space $V$ is a finite-dimensional orthogonal $G$-representation with some given
orthogonal decomposition $V = \bigoplus_{h\in H} V_h$. Assume $L\ge 3$.

For a $G$-equivariant vector bundle $E\rightarrow X$, define
\begin{equation}
	\label{eq:bundle-D-def}
	\begin{aligned}
		\mathcal{D}(V; E):= \Big\{
		(\phi_1\mathrm{ev}_p\phi_2)_{p\in X}:
		\phi_1\in \mathrm{Hom}_G(\Gamma(E), V),
		\phi_2\in \mathrm{Hom}_G(V, \Gamma(E))
		\Big\}
		\subset \mathrm{End}(V)^X.
	\end{aligned}
\end{equation}
This is the set of all elements $M$ in $\mathrm{End}(V)^X$
that satisfy the equivariance condition $M_{gp} = gM_pg^{-1}$
and factor through the bundle $E$ in the sense of (3) of Section~\ref{sec:equivariant-bundle-motivation}.

For $r: X\rightarrow \mathbb{N}_{\ge 0}$, define
\begin{equation}
	\label{eq:gset-Dr-def}
	\mathcal{D}_r(V; X) := \Big\{
	M\in \mathrm{End}(V)^X:
	\mathrm{rank}(M_p) \le r(p),
	M_{gp} = gM_pg^{-1}
	\Big\},
\end{equation}
which is the family of elements in $\mathrm{End}(V)^X$ that satisfy the same
equivariance condition and are rank constrained by $r$, i.e., what we called
$\mathcal{F}$ in Section~\ref{sec:equivariant-bundle-motivation}.

Let $\mathrm{Sub}^r(E)$ be the set of equivalence classes of maximal
$r$-rank-bounded subbundles of $E$ (Definition~\ref{def:equiv-bundle}) up to bundle isomorphisms, where maximality
is defined with respect to inclusions. Then, the argument in Section~\ref{sec:equivariant-bundle-motivation}
is formalized as
\begin{equation}
	\label{eq:equiv-system-rank-bounded}
	\mathcal{D}_r(V; X)= \bigcup_{E \in \mathrm{Sub}^r(X\times V)} \mathcal{D}(V; E).
\end{equation}
See Lemma~\ref{lem:equiv-system-rank-bounded-existence} in
Appendix~\ref{app:equiv-bundle} for a proof.

Define
\begin{equation}
	\label{eq:ZE-def}
	\begin{aligned}
		\mathcal{Z} & := \{(\phi_2\pi_h\phi_1)_{h\in H}: \phi_1,\phi_2\in\mathrm{End}(V)\}
		= \{B\in \mathrm{End}(V)^H: \mathrm{rank}(B_h)\le\dim V_h\}                                                                                       \\
		            & \subset \mathrm{End}(V)^H                                                                                                           \\
		\mathcal{E} & := \Big\{ (M, R)\in \mathcal{Z}\times \mathcal{Z}: \mathrm{RMHSA}(-; M, R): \mathbb{R}^L\otimes V \rightarrow \mathbb{R}^L\otimes V
		\text{ is } G\text{-equivariant}
		\Big\}                                                                                                                                            \\
		            & \subset \mathrm{End}(V)^{H}\oplus \mathrm{End}(V)^H,
	\end{aligned}
\end{equation}
That is, $\mathcal{E}$ is the subset of the RMHSA equivariance locus
satisfying the rank constraints defining $\mathcal{Z}$.

For each partial $G$-clustering $\chi: H\rightarrow X_0\sqcup X$, define the
\textit{attention width} $a^\chi$ and \textit{value width} $u^\chi$ as

\begin{equation}
	\label{eq:attn-val-width-def}
	\begin{aligned}
		a^\chi & : X \rightarrow \mathbb{N}_{\ge 0}
		       & \qquad\qquad\qquad u^\chi                & : X \rightarrow \mathbb{N}_{\ge 0}         \\
		p      & \mapsto \min_{h\in \chi^{-1}(p)}\dim V_h
		       & \qquad\qquad\qquad p                     & \mapsto \sum_{h\in \chi^{-1}(p)} \dim V_h.
	\end{aligned}
\end{equation}

Now, set
\begin{equation}
	\mathcal{E}^\chi :=
	\Big[\mathcal{P}_\chi^*\Big(\mathrm{End}(V)^{X_0}\times \mathcal{D}_{a^\chi}(V; X)\Big)\cap \mathcal{Z}
		\Big]
	\times
	\Big[\mathcal{P}_\chi^{-1}\Big(\{0\}^{X_0}\times \mathcal{D}_{u^\chi}(V; X)\Big)\cap \mathcal{Z}
		\Big],
\end{equation}
where $\mathcal{D}_{a^\chi}(V; X)$ and $\mathcal{D}_{u^\chi}(V; X)$ are defined
in Eq.~\eqref{eq:gset-Dr-def}.
The set $\mathcal{E}^\chi$ is the subset of $\mathrm{End}(V)^H\oplus \mathrm{End}(V)^H$
consisting of the RMHSA parameters corresponding to the partial $G$-clustering $\chi$,
rank-constrained by the attention and value widths.
Clearly, $\mathcal{E}^\chi =
	\mathcal{E}^{\chi'}$ if $\chi \cong \chi'$.
By Theorem~\ref{thm:equiv-attn-char-2}, we have $\mathcal{E} = \bigcup_\chi
	\mathcal{E}^\chi$, where the union is taken over all partial $G$-clusterings
$\chi: H \rightarrow X_0\sqcup X$ up to isomorphism.

For $E\in \mathrm{Sub}^{a^\chi}(X\times V)$ and $E'\in
	\mathrm{Sub}^{u^\chi}(X\times V)$, define
\begin{equation}
	\label{eq:EchiEE-def}
	\mathcal{E}^{\chi, E, E'}
	= \Big[\mathcal{P}_\chi^*\Big(\mathrm{End}(V)^{X_0}\times \mathcal{D}(V; E)\Big)\cap \mathcal{Z}\Big] \times
	\Big[\mathcal{P}_\chi^{-1}\Big(\{0\}^{X_0}\times \mathcal{D}(V;E')\Big)\cap \mathcal{Z}\Big].
\end{equation}

\begin{proposition}
	$\mathcal{E}^\chi = \bigcup_{E\in \mathrm{Sub}^{a^\chi}(X\times
			V)}\bigcup_{E'\in \mathrm{Sub}^{u^\chi}(X\times V)} \mathcal{E}^{\chi, E,
			E'}$.
	\begin{proof}
		This is a straightforward consequence of Lemma~\ref{lem:equiv-system-rank-bounded-existence},
		more precisely of Eq.~\eqref{eq:equiv-system-rank-bounded}.
	\end{proof}
\end{proposition}

\subsection{Step 1: Geometry of the Equivariance Locus in the RMHSA Parameter Space}
\label{sec:step1-equivariance-locus}

In the following (especially in Section~\ref{sec:step2-function-space}), we will let $\mathfrak{R}: \mathrm{End}(V)^H\oplus
	\mathrm{End}(V)^H \rightarrow \{\mathbb{R}^L\otimes V \rightarrow \mathbb{R}^L\otimes V\}$
denote the realization map of RMHSA.
Ultimately, we are interested in the class of functions $\mathfrak{R}(\mathcal{E})$.
However, we will first work in $\mathrm{End}(V)^H\oplus \mathrm{End}(V)^H$,
where algebraic geometry is more easily applicable.


\begin{proposition}
	\leavevmode
	\begin{enumerate}
		\item[(1)] $\mathcal{E}^{\chi, E, E'} \subset \mathrm{End}(V)^H\oplus \mathrm{End}(V)^H$
		      is Zariski-closed. If $\chi$ is injective (hence bijective), then $\mathcal{E}^{\chi, E, E'}$
		      is irreducible.
		\item[(2)] For a fixed $\chi$, if $(E_1, E'_1)$ and $(E_2, E'_2)$ are not isomorphic, then neither $\mathcal{E}^{\chi, E_1, E_1'}$
		      nor $\mathcal{E}^{\chi, E_2, E_2'}$ contains the other.
		\item[(3)]
		      Consider the trivial $G$-clustering $\mathrm{Id}: H \rightarrow H=X$ (Example~\ref{ex:trivial-clustering}).
		      For any $E\in \mathrm{Sub}^{a^\mathrm{Id}}(X\times V)$ and $E'\in \mathrm{Sub}^{u^\mathrm{Id}}(X\times V)$
		      without zero fibers (i.e. $\dim E_p, \dim E'_p > 0$ for all $p\in X=H$),
		      the set $\mathcal{E}^{\mathrm{Id}, E, E'}$ is an irreducible component
		      (i.e., maximal irreducible subset) of $\mathcal{E}$.

	\end{enumerate}
	\begin{proof}
		\;\\
		(1)\\
		Since $\mathcal{P}_\chi^*$ is an injective linear map, the set $\mathcal{E}^{\chi, E,
				E'}$, as defined by Eq.~\eqref{eq:EchiEE-def}, is closed by Lemma~\ref{lem:maps-through-bundle-closed}.

		If $\chi$ is injective, then without loss of generality $\chi = \mathrm{Id}_H$,
		in which case $\mathcal{P}_\chi = \mathrm{Id}_H$.
		So Eq.~\eqref{eq:EchiEE-def} reduces to
		\begin{equation}
			\mathcal{E}^{\chi, E, E'} =
			\Big[\big(\mathrm{End}(V)^{X_0} \times \mathcal{D}(V;E)\big)\cap \mathcal{Z}\Big]
			\times \Big[\big(\{0\}^{X_0} \times \mathcal{D}(V; E')\big)\cap \mathcal{Z}\Big].
		\end{equation}
		This is irreducible because it is a Cartesian product of irreducible sets.
		More precisely, $\mathcal{Z}\subset \mathrm{End}(V)^{X_0\sqcup X}$ (defined in
		Eq.~\eqref{eq:ZE-def}) is rectangular with respect to $\mathrm{End}(V)^{X_0}\times \mathrm{End}(V)^X$,
		and both $\mathcal{D}(V; E)$ and $\mathcal{D}(V; E')$ are irreducible by
		Lemma~\ref{lem:maps-through-bundle-closed}.

		(2)\\
		Suppose $\mathcal{E}^{\chi, E_1, E_1'}\subset \mathcal{E}^{\chi, E_2, E_2'}$.
		Then
		\begin{equation}
			\begin{aligned}
				\mathcal{P}_\chi^*\Big(\mathrm{End}(V)^{X_0}\times \mathcal{D}(V; E_1)\Big)\cap \mathcal{Z}
				\subset
				\mathcal{P}_\chi^*\Big(\mathrm{End}(V)^{X_0}\times \mathcal{D}(V; E_2)\Big)\cap \mathcal{Z}.
			\end{aligned}
		\end{equation}
		Apply $\mathcal{P}_\chi(\cdot)$ to both sides to get
		\footnote{This needs some care.
		$\mathcal{Z} = \mathcal{Z}_0 \times \mathcal{Z}_1 \subset \mathrm{End}(V)^{X_0}\times \mathrm{End}(V)^{X}$
		is rectangular, and the projection can be also split accordingly into $\mathcal{P}_0=\mathcal{P}_\chi|_{\chi^{-1}(X_0)}, \mathcal{P}_1=\mathcal{P}_{\chi}|_{\chi^{-1}(X)}$ (slight abuse of notation).
		Then use $\mathcal{D}(V;E_1)\cap \mathcal{Z}_1 = \mathcal{D}(V;E_1)$ and
		the injectivity of $\mathcal{P}_1 \mathcal{P}_1^*$.
		}
		$\mathcal{D}(V; E_1)
			\subset \mathcal{D}(V; E_2)$.
		Similarly, we have
		\begin{equation}
			\begin{aligned}
				\mathcal{P}_\chi^{-1}\Big(\{0\}^{X_0}\times \mathcal{D}(V; E_1')\Big)\cap \mathcal{Z}
				\subset
				\mathcal{P}_\chi^{-1}\Big(\{0\}^{X_0}\times \mathcal{D}(V; E_2')\Big)\cap \mathcal{Z}.
			\end{aligned}
		\end{equation}
		Again, apply $\mathcal{P}_\chi(\cdot)$ to get
		\footnote{Similarly,
		the left hand side is
		$(\dotsb)\times [\mathcal{P}_1^{-1}(\mathcal{D}(V; E_1'))\cap \mathcal{Z}_1]$,
		and $\mathcal{P}_1(\cdot)$ on the second factor recovers
		$\mathcal{D}(V;E_1')$.
		}
		$\mathcal{D}(V; E_1') \subset \mathcal{D}(V; E_2')$.

		By Lemma~\ref{lem:maps-through-bundle-closed}, $E_1$ is isomorphic to a
		subbundle of $E_2$ and $E_1'$ is isomorphic to a subbundle of $E_2'$.

		By maximality, we must have $E_1 \cong E_2$ and $E_1' \cong E_2'$.

		(3)\\
		By (1),
		$\mathcal{E}^{\mathrm{Id}, E, E'}$ is irreducible. It remains to show maximality.
		Suppose $\mathcal{E}^{\mathrm{Id}, E, E'} \subset \mathcal{E}^{\tilde \chi, \tilde E, \tilde E'}$
		for some partial $G$-clustering $\tilde \chi: H \rightarrow \tilde X_0\sqcup \tilde X$.
		If $\mathrm{Id}\cong \tilde \chi$, that is, $\tilde X_0=\varnothing$, $|\tilde X|=|H|$, and $G$ acts on $\tilde X$ trivially,
		then we must have $(E, E')\cong (\tilde E, \tilde E')$ by (2), assuming $\tilde \chi = \mathrm{Id}$ without loss of generality.
		Thus, suppose $\mathrm{Id}\not\cong \tilde\chi$.
		There are three (not mutually exclusive) possibilities:
		\begin{enumerate}
			\item[(a)] $\tilde X_0\neq \varnothing$
			\item[(b)] $\tilde \chi$ has a nontrivial fiber
			\item[(c)] $\tilde X\neq \varnothing$ and the $G$-action on $\tilde X$ is not trivial.
		\end{enumerate}
		In each of these three cases, we can show $\mathcal{E}^{\mathrm{Id}, E, E'}\setminus \mathcal{E}^{\tilde \chi, \tilde E,
				\tilde E'}$ is not empty:

		In case (a), we can find $(M,R)\in \mathcal{E}^{\mathrm{Id}, E, E'}$
		such that $\sum_{h\in \tilde\chi^{-1}(\tilde X_0)}R_h\neq 0$.\\
		In case (b), suppose $\tilde\chi(h) = \tilde \chi(h')$ with $h\neq h'$.
		Find $(M, R)\in \mathcal{E}^{\mathrm{Id}, E, E'}$ such that $M_h \neq M_{h'}$.\\
		In case (c), we may assume $\tilde X_0=\varnothing$ and $\tilde \chi$ is injective,
		and suppose $g\cdot p = p'$ with $p\neq p'\in \tilde X$.
		Then simply take $(M, R)\in \mathcal{E}^{\mathrm{Id}, E, E'}$
		such that $gM_{\tilde\chi^{-1}(p)}g^{-1}\neq M_{\tilde\chi^{-1}(p')}$.
	\end{proof}
\end{proposition}

\begin{corollary}
	\label{cor:polynomial-only-one-component}
	Let $\beta: \Lambda\rightarrow \mathrm{End}(V)^4$ be a polynomial map, where $\Lambda$
	is a finite-dimensional real vector space.
	Define
	\begin{equation}
		\mathcal{E}^{\beta}
		:= \Big\{
		((\phi_q^* \pi_h \phi_k)_{h\in H}, (\phi_o \pi_h\phi_v)_{h\in H}):
		(\phi_q, \phi_k, \phi_v, \phi_o) \in \beta(\Lambda)
		\Big\}\subset \mathrm{End}(V)^H \oplus \mathrm{End}(V)^H.
	\end{equation}
	If $\mathcal{E}^\beta\subset \mathcal{E}$, then there is a triple $(\chi, E, E')$
	such that $\mathcal{E}^{\beta}\subset \mathcal{E}^{\chi, E, E'}$.
	\begin{proof}
		$\mathcal{E}^{\beta}\subset \mathcal{E}$
		implies $\mathcal{E}^{\beta} = \bigcup_{(\chi, E, E')}\mathcal{E}^{\chi, E, E'} \cap \mathcal{E}^{\beta}$.
		Since the $\mathcal{E}^{\chi, E, E'}$ are Zariski-closed, each $\mathcal{E}^{\beta} \cap \mathcal{E}^{\chi, E, E'}$
		is closed in $\mathcal{E}^{\beta}$.
		But $\mathcal{E}^\beta$ is irreducible: it is the image of an irreducible set under a polynomial map.
		Thus, $\mathcal{E}^{\beta} \subset \mathcal{E}^{\chi, E, E'}$
		for some $(\chi, E, E')$.
	\end{proof}
\end{corollary}

\begin{corollary}
	\label{thm:param-space-irrcomp-estimate}
	The set $\mathcal{E}$ has at least
	$|\mathrm{Sub}^{(\dim V_h)_{h\in H}}_+(H\times V)|^2$
	irreducible components, where the subscript ``$+$'' means bundles with zero fibers are
	excluded.
\end{corollary}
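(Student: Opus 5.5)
We will obtain the bound directly from part (3) of the preceding Proposition, combined with part (2) to show that the components we list are pairwise distinct. Specialize to the trivial $G$-clustering $\mathrm{Id}: H\rightarrow H = X$ with $X_0 = \varnothing$ and trivial $G$-action on $X$. For every $p\in X$ the fiber $\mathrm{Id}^{-1}(p)=\{p\}$ is a singleton, so the attention and value widths of Eq.~\eqref{eq:attn-val-width-def} agree:
\[
a^{\mathrm{Id}}(p) = u^{\mathrm{Id}}(p) = \dim V_p .
\]
Therefore both $\mathrm{Sub}^{a^{\mathrm{Id}}}(X\times V)$ and $\mathrm{Sub}^{u^{\mathrm{Id}}}(X\times V)$ equal $\mathrm{Sub}^{(\dim V_h)_{h\in H}}(H\times V)$. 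Let $\mathrm{Sub}_+$ denote its subset of bundles without zero fibers.

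For each ordered pair $(E,E')\in \mathrm{Sub}_+\times\mathrm{Sub}_+$, part (3) of the Proposition says that $\mathcal{E}^{\mathrm{Id},E,E'}$ is an irreducible component of $\mathcal{E}$. It remains to show that distinct pairs give distinct components, so that the count is exactly $|\mathrm{Sub}_+|^2$. Take two pairs $(E_1,E_1')$ and $(E_2,E_2')$ that are not isomorphic as pairs, meaning $E_1\not\cong E_2$ or $E_1'\not\cong E_2'$. Part (2) then says neither $\mathcal{E}^{\mathrm{Id},E_1,E_1'}$ nor $\mathcal{E}^{\mathrm{Id},E_2,E_2'}$ contains the other, so in particular they are different sets. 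Since $\mathrm{Sub}$ is already defined as a set of isomorphism classes, the map from pairs of classes to components is well defined and injective. This gives at least $|\mathrm{Sub}_+|^2$ distinct irreducible components.

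The main point needing care is the claim that the widths coincide, so that the \emph{same} set $\mathrm{Sub}_+$ indexes both $E$ and $E'$ and the count is a genuine square. This relies on the fact that for $\chi=\mathrm{Id}$ the minimum and the sum in Eq.~\eqref{eq:attn-val-width-def} are both taken over a single head. We also need $\mathrm{Sub}_+$ to be finite for the count to be meaningful. If it were infinite, the statement would hold trivially; in the relevant case $G$ is finite, so it has finitely many representations of bounded dimension, and $\mathrm{Sub}_+$ is finite. No further argument beyond quoting (2) and (3) is needed.
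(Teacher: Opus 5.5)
Your proof is correct and follows the same route as the paper. The paper gives no separate proof of this corollary: it follows from part (3) of the preceding Proposition applied to every pair $(E,E')$ of bundles without zero fibers for the trivial clustering $\mathrm{Id}$, with part (2) ensuring that non-isomorphic pairs give distinct components. Your observation that $a^{\mathrm{Id}}=u^{\mathrm{Id}}=(\dim V_h)_{h\in H}$ is precisely why the same set $\mathrm{Sub}_+$ indexes both factors, so the count is a square.
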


Note that $\mathrm{Sub}_+^{(\dim V_h)_{h\in H}}(H\times V)\cong \prod_{h\in
		H}\mathrm{Sub}_+^{\dim V_h}(V)$,
where we think of $V$ as the $G$-equivariant vector bundle over one point.
Equivalently, $\mathrm{Sub}^r_+(V)$ consists of equivalence classes of maximal
$G$-subrepresentations of dimension at most $r$. In the main text, $|\mathrm{Sub}_+^r(V)|$
is denoted by $c_{G, r}(V)$.

\begin{example}
	\leavevmode
	\begin{enumerate}
		\item[(1)] Let $G= C_3$ act on $\mathbb{R}^2$ by $r\mapsto R(2\pi/3)$,
		      which is irreducible (and equivalent to $\mathrm{E}_1$).
		      Then $\mathrm{Sub}_+^1(\mathbb{R}^2) = \varnothing$
		      because there are no one-dimensional subrepresentations.
		\item[(2)] For the regular representation $\mathbb{R}[D_4]$ of the dihedral group $D_4$,
		      which decomposes as $\mathrm{A}_1\oplus \mathrm{A}_2\oplus \mathrm{B}_1\oplus \mathrm{B}_2\oplus 2\mathrm{E}_1$,
		      we have
		      \begin{equation}
			      \begin{aligned}
				       & \mathrm{Sub}^2(\mathbb{R}[D_4])
				      =\mathrm{Sub}^2_+(\mathbb{R}[D_4])                                                                        \\
				       & = \{\mathrm{A}_1\oplus \mathrm{A}_2, \mathrm{A}_1\oplus \mathrm{B}_1, \mathrm{A}_1\oplus \mathrm{B}_2,
				      \mathrm{A}_2\oplus \mathrm{B}_1, \mathrm{A}_2\oplus \mathrm{B}_2, \mathrm{B}_1\oplus \mathrm{B}_2, \mathrm{E}_1
				      \}.
			      \end{aligned}
		      \end{equation}
	\end{enumerate}
\end{example}

\subsection{Step 2: Passing to the Function Space}
\label{sec:step2-function-space}
Corollary~\ref{thm:param-space-irrcomp-estimate} is
a statement made in the parameter space of $\mathrm{RMHSA}$.
Transferring this to the function space requires modding out permutations:

\begin{proposition}
	\label{prop:containment-control}
	Suppose $\mathfrak{R}(\mathcal{E}^{\mathrm{Id}, E, E'}) \subset
		\bigcup_{k=1}^N \mathfrak{R}(\mathcal{E}^{\chi^k, \tilde E^k, \tilde E'^k})$,
	with
	$E, E'\in\mathrm{Sub}_+^{(\dim V_h)_{h\in H}}(H\times V)$
	and
	$\tilde E^k\in \mathrm{Sub}^{a^{\chi^k}}(X^k\times V), \tilde E'^k\in\mathrm{Sub}^{u^{\chi^k}}(X^k\times V)$.
	Then there is a $k\in \{1, \dotsc, N\}$
	and a permutation $\nu \in \mathrm{Aut}(H)$
	such that $\chi^k \cong \mathrm{Id}$
	and
	$E \hookrightarrow \tilde E^{k,\nu}$ and
	$E' \hookrightarrow \tilde E^{\prime k, \nu}$,
	where $\tilde E^{k,\nu}, \tilde E^{\prime k,\nu}$ denote the permuted bundles.

	If all heads have the same dimension $d:= \dim V / |H| = \dim V_h$,
	then the same statement holds with $E \cong \tilde E^{k, \nu}$
	and $E' \cong \tilde E^{\prime k, \nu}$.

	(Note: we have implicitly identified $\chi^k$ with $\mathrm{Id}$, which is harmless because
	there is only one isomorphism.)

	\begin{proof}
		Let
		\begin{equation}
			U := (\mathcal{D}(V; E)\setminus \mathcal{Q} )
			\times (\mathcal{D}(V; E') \setminus \mathcal{Q}'),
		\end{equation}
		where
		\begin{equation}
			\begin{aligned}
				 & \mathcal{Q}:=\{M: \exists h\neq h'\; M_h = M_{h'} \}\cap \mathcal{D}(V; E) \\
				 & \mathcal{Q}':=\{R: \exists h: R_h = 0 \} \cap \mathcal{D}(V; E').
			\end{aligned}
		\end{equation}
		Then $U$ is dense in $\mathcal{D}(V; E)\times \mathcal{D}(V; E') = \mathcal{E}^{\mathrm{Id}, E, E'}$,
		since $E$ and $E'$ do not have zero fibers.


		Take $(M, R)\in U$.
		There is some $k$ for which $\mathfrak{R}(M, R) \in \mathfrak{R}(\mathcal{E}^{\chi^k, \tilde E^k, \tilde E^{\prime k}})$.
		By Theorem~\ref{thm:identifiability}, for any $\tilde M, \tilde
			R\in \mathrm{End}(V)^H$, a necessary condition for
		$\mathfrak{R}(M, R) = \mathfrak{R}(\tilde M, \tilde R)$ is $|\{\tilde M_h: \tilde R_{h} \neq 0\}|= |H|$.

		Take any $(\tilde M, \tilde R) \in \mathcal{E}^{\chi^k, \tilde E^{k}, \tilde E^{\prime k}}$.
		If $\chi^k$ is not injective, then $|\{\tilde M_h: \tilde R_h \neq 0\}|\le |X^k| < |H|$.
		Thus, suppose $\chi^k$ is injective. If $X_0^k\neq \varnothing$, then
		$|\{\tilde M_h: \tilde R_h \neq 0\}| = |X^k| < |H|$.
		Finally, assume $\chi^k$ is injective and $X_0^k=\varnothing$,
		but $gp = p'$ for distinct points $p, p'\in X^k$ and $g\in G$.
		By Theorem~\ref{thm:identifiability}, there  must exist an $h\in H$ such
		that $\tilde M_{(\chi^k)^{-1}(p)} =  M_h$.
		But then $\tilde M_{(\chi^k)^{-1}(p')} = g\tilde M_{(\chi^k)^{-1}(p)}g^{-1} = M_h = \tilde M_{(\chi^k)^{-1}(p)}$,
		so again $|\{\tilde M_h: \tilde R_h\neq 0\}| < |H|$.
		In other words, we must have $\chi^k \cong \mathrm{Id}$.
		Again by Theorem~\ref{thm:identifiability}, there is a permutation $\nu\in \mathrm{Aut}(H)$
		such that $(M,R)\in \mathcal{E}^{\mathrm{Id}, \tilde E^{k, \nu}, \tilde E^{\prime k, \nu}}$, where
		$\tilde E^{k, \nu}$ denotes the permuted bundle.

		Hence, we have shown
		\begin{equation}
			U \subset \bigcup_{k: \chi^k\cong\mathrm{Id}}^N \bigcup_{\nu\in \mathrm{Aut}(H)}
			\mathcal{E}^{\chi^k, \tilde E^{k, \nu}, \tilde E^{\prime k, \nu}}.
		\end{equation}

		Since $\mathcal{D}(V; E), \mathcal{D}(V; E')$ are irreducible
		(Lemma~\ref{lem:maps-through-bundle-closed}(1)), taking the closure of both
		sides yields
		\begin{equation}
			\mathcal{E}^{\mathrm{Id}, E, E'}\subset
			\bigcup_{k: \chi^k\cong\mathrm{Id}}^N \bigcup_{\nu\in \mathrm{Aut}(H)}
			\mathcal{E}^{\chi^k, \tilde E^{k, \nu}, \tilde E^{\prime k, \nu}}.
		\end{equation}
		Again by irreducibility, $\mathcal{E}^{\mathrm{Id}, E, E'} \subset
			\mathcal{E}^{\chi^k, \tilde E^{k,\nu}, \tilde E^{\prime k,\nu}}$ for
		some $k$ and $\nu$, which is equivalent to $\mathcal{D}(V; E)\subset\mathcal{D}(V; \tilde E^{k,\nu})$
		and $\mathcal{D}(V; E') \subset \mathcal{D}(V; \tilde E^{\prime k,\nu})$.
		By Lemma~\ref{lem:maps-through-bundle-closed}(2),
		$E\hookrightarrow \tilde E^{k, \nu}$ and $E'\hookrightarrow \tilde E^{\prime k, \nu}$.

		If the head dimensions are equal, then $E\cong \tilde E^{k, \nu}$
		and $E'\cong \tilde E^{\prime k, \nu}$
		by maximality of $E$ and $E'$ among $d$-rank-bounded subbundles of $H\times V$.
	\end{proof}
\end{proposition}

%

Finally, we prove Theorem~\ref{thm:components}.
Let $d:= \dim V / |H|$ be the uniform head dimension.
Recall that the (rank-bounded)
equivariance locus $\mathcal{E}$ in the RMHSA parameter space is the union
of the $\mathcal{E}^{\chi, E, E'}$, which are Zariski-closed.
Express this as $\mathcal{E} = \bigcup_{a\in A} \mathcal{E}_a$.
Proposition~\ref{prop:containment-control} identifies a subset $A_0\subset A$, namely those
with $\chi \cong \mathrm{Id}$ and $E, E' \in \mathrm{Sub}_+^{d}(H\times V)$,
with the following property:
there is an $\mathrm{Aut}(H)$-action on $A_0$ such that
if $\mathfrak{R}(\mathcal{E}_a)\subset \bigcup_{c\in C}\mathfrak{R}(\mathcal{E}_c)$ with $a\in
	A_0$ and finite $C\subset A$,
then we can find a $c\in A_0\cap C$ such that $a,c$ are in the same $\mathrm{Aut}(H)$-orbit.

Now, let $(\beta\text{-MHSA})_{\beta\in \mathbf{B}}$ be a finite (otherwise there is nothing to prove) $G$-complete family
of polynomially parameterized MHSA.
By
Corollary~\ref{cor:polynomial-only-one-component}, there is a map $J:
	\mathbf{B} \rightarrow A$ such that $\mathcal{E}^\beta\subset
	\mathcal{E}_{J(\beta)}$. By $G$-completeness, we have $\bigcup_{\beta \in
		\mathbf{B}} \mathfrak{R}(\mathcal{E}_{J(\beta)}) = \bigcup_{a\in
		A}\mathfrak{R}(\mathcal{E}_a)$. That is, for any $a\in A_0$,
we have $\mathfrak{R}(\mathcal{E}_a)\subset \bigcup_{\beta \in \mathbf{B}}\mathfrak{R}(\mathcal{E}_{J(\beta)})$.
Proposition~\ref{prop:containment-control} then
implies there is a $\beta$ such that $J(\beta) \in A_0$ and $a, J(\beta)$ are
in the same $\mathrm{Aut}(H)$-orbit.
That is,
\begin{equation}
	A_0 \subset \mathrm{Aut}(H)\cdot (J(\mathbf{B})\cap A_0),
\end{equation}
or,
\begin{equation}
	|A_0| \le |H|! \cdot |J(\mathbf{B})\cap A_0|
	\le |H|! \cdot |J(\mathbf{B})| \le |H|! \cdot |\mathbf{B}|.
\end{equation}
That is,
\begin{equation}
	|\mathbf{B}| \ge \frac{1}{|H|!} |\mathrm{Sub}_+^d(H\times V)|^2
	= \frac{1}{|H|!} c_{G,d}(V)^{2|H|}.
\end{equation}

%
%
%
%

\section{Conclusion and Open Problems}
\label{sec:concl}

We prove two theorems, Theorem~\ref{thm:equiv-attn-char}, which characterizes
the equivariance locus of MHSA for $L\ge 3$ tokens and any group, and
Theorem~\ref{thm:components}, which gives a lower bound on the number of
equivariant architectures needed to recover all equivariant functions
expressible by the unconstrained MHSA.
Hence, the answer to the question ``can a constrained MHSA express exactly
the equivariant functions expressible by the unconstrained MHSA, and nothing
more?'' is \textit{no} in general, with the lower bound of architectures needed
given by $c_{G,d}(V)^{2|H|} / |H|!$, scaling as $\Omega(C^{64})$ for $C$ copies
of the regular representation of $D_4$ with eight heads and even $C$.
We would like to remark that our MHSA setting is quite special on purpose but
easily generalizable: the sole effect of the orthogonal decomposition
$V=\bigoplus_{h\in H}V_h$ is to limit the ranks of the $M_h$ and the $R_h$,
through the value and attention widths, and our discussions are still valid for
arbitrary rank constraints. Theorem~\ref{thm:equiv-attn-char} could also be
easily generalized to different input and output token feature spaces, which
is more natural for modular addition, for example.

\subsection{Outlook}
\begin{itemize}[itemsep=3pt, topsep=5pt, parsep=1pt]
	\item \textit{Going deeper.} Theorem~\ref{thm:equiv-attn-char} covers a single layer of MHSA. The same
	      characterization question remains open for an entire transformer, or even just
	      for a single transformer block. As a piece of empirical evidence,
	      \cite{nandaProgressMeasuresGrokking2023} found that one transformer block
	      learns Fourier/irrep structures when trained on a modular addition task, which
	      is equivariant, suggesting a plausible extension of
	      Theorem~\ref{thm:equiv-attn-char}.
	\item \textit{Approximate equivariance.} The algebro-geometric approach
	      in the proof of Theorem~\ref{thm:components}
	      was effective because exact equivariance is translated to clean algebraic constraints
	      on the RMHSA parameters via Theorem~\ref{thm:equiv-attn-char}.
	      In particular, Theorem~\ref{thm:equiv-attn-char} says nothing about
	      approximately equivariant architectures.
	      It could well be the case that a single component is ``close''
	      to all others.
	\item \textit{Classifying existing architectures.}
	      It would be interesting to retrospectively classify all equivariant
	      MHSAs that have been proposed so far against
	      Theorem~\ref{thm:equiv-attn-char}:
	      most architectures would have $X_0=\varnothing$ and $\chi$ injective,
	      but the group action on $H$ and the irrep content of each head are
	      genuine design choices.
	\item \textit{Experiments.}
	      Numerical experiments illustrating
	      the separation of the components will appear in a subsequent version.
	      These will involve a teacher-student setup: two students, one
	      unconstrained and one equivariant, trained against an equivariant
	      teacher whose head action and irrep types differ from those of the
	      equivariant student.
\end{itemize}


\bibliographystyle{plainnat}
\bibpunct{(}{)}{;}{a}{,}{,}
\bibliography{Refs}

\appendix

\section{Equivariant Vector Bundles}
\label{app:equiv-bundle}

Here, we review the notion of equivariant vector bundles, which are used
in Section~\ref{app:irrcomp-proof} to decompose the equivariance locus in the
parameter space of RMHSA. For our purpose, it is enough to consider finite
$G$-sets as base spaces.

While we worked with an arbitrary group $G$ in Section~\ref{app:proof} and
Theorem~\ref{thm:equiv-attn-char}, we will assume $G$ is finite in this
section.

\begin{definition}
	\label{def:equiv-bundle}
	A \textbf{$G$-equivariant vector bundle}
	is a $G$-equivariant surjective map $\pi: E \rightarrow X$, where $E, X$ are $G$-sets, $X$ is
	finite, $E_p := \pi^{-1}(p)$ is a finite-dimensional vector space
	for each $p\in X$, and $g: E_p \rightarrow E_{gp}$ is linear.
	\begin{enumerate}
		\item A \textbf{section} is a map $s: X\rightarrow E$ such that $\pi \circ s = \mathrm{Id}_{X}$.
		      The space of sections is denoted by $\Gamma(E)$ and is isomorphic to $\bigoplus_{p\in X} E_p$.
		      It is a $G$-representation through the action $(g\cdot s)(p) = g \cdot s(g^{-1}p)$.
		      For $p\in X$, define the \textbf{evaluation map} as $\mathrm{ev}_p:
			      \Gamma(E)\rightarrow E_p, s \mapsto s(p)$. Identifying $E_p$
		      with the corresponding subspace of $\Gamma(E)$,
		      we may think of $\mathrm{ev}_p$ as belonging to $\mathrm{End}(\Gamma(E))$.
		\item The \textbf{local rank} is $\mathrm{rk}_p(E) := \dim(E_p)$, and depends only
		      on the orbit of $p\in X$.
		\item If $E'$ is another $G$-equivariant vector bundle over $X$, a
		      \textbf{bundle homomorphism} is a $G$-equivariant map $\varphi: E\rightarrow E'$
		      such that $\varphi(E_p)\subset E'_p$ and $\varphi|_{E_p}$ is linear.
		\item A \textbf{subbundle} is a $G$-stable subset $D\subset E$
		      such that $D_p := D\cap E_p$ is a linear subspace of $E_p$ for each $p\in X$.
		\item Given a prescribed function $r: X\rightarrow \mathbb{N}_{\ge0}$,
		      a bundle $\pi: E\rightarrow X$ is said to be \textbf{$r$-rank-bounded}
		      if $\mathrm{rk}_p(E) \le r(p)$ for all $p\in X$.
	\end{enumerate}
\end{definition}

We will always assume that each fiber $E_p$ is equipped with an inner product
such that the induced inner product on $\Gamma(E)$ is $G$-invariant.

Recall that $\mathrm{Sub}^r(E)$ is the set of equivalence classes
of maximal $r$-rank-bounded subbundles of $E$ up to bundle isomorphisms
(see Section~\ref{sec:proof-notations}).

\begin{example}
	For any $G$-set $X$ and any $G$-representation $V$, we can construct the
	\textit{product bundle} $\pi_1: X\times V \rightarrow X$,
	where $G$ acts on both factors in $X\times V$.
\end{example}

\begin{example}
	\label{ex:induced-rep}
	Let $K\subset G$ be a subgroup and let $W$ be a $K$-representation.
	Let $E := G\times_K W = \{[g, w]: g\in G, w\in W\}$
	where $[g, w] = [gk, k^{-1}w]$ for $k\in K$.
	Then $\pi: E\rightarrow G/K, [g, w]\mapsto gK$ is a $G$-equivariant vector bundle,
	and $\Gamma(E)\cong \mathrm{Ind}_K^G(W)$.
	For any $G$-representation $V$, there is a canonical isomorphism
	$\mathrm{Hom}_K(V, W)\cong \mathrm{Hom}_G(V, \Gamma(E))$
	given as follows:
	if $f\in \mathrm{Hom}_K(V, W)$,
	define $\tilde f\in \mathrm{Hom}(V, \Gamma(E))$
	by setting $\tilde f(v)(gK)= gf(g^{-1}v)\in E_{gK}$.
	This is well-defined: if $g' = gk$, then $g'f{g'}^{-1} = gkfk^{-1}g^{-1} = gfg^{-1}$
	because $f$ is $K$-equivariant.
	It is easy to check that the map $\tilde f$ is $G$-equivariant.
	Similarly, we also have adjunction in the other direction $\mathrm{Hom}_K(W, V)\cong \mathrm{Hom}_G(\Gamma(E), V)$.
\end{example}

\begin{lemma}
	\label{lem:eval-map-equiv}
	The map $X \rightarrow \mathrm{End}(\Gamma(E)), p \mapsto \mathrm{ev}_p$ is $G$-equivariant.
	\begin{proof}
		Take a section $s\in \Gamma(E)$. Then
		\begin{equation}
			\begin{aligned}
				 & \mathrm{ev}_{gp} (s) = s(gp) = gg^{-1} s(gp)
				= g (g^{-1}\cdot s)(p)                          \\
				 & = g \cdot \mathrm{ev}_{p} (g^{-1}\cdot s)
				= [g \mathrm{ev}_p g^{-1}] (s).
			\end{aligned}
		\end{equation}
		Since $s$ was arbitrary, we have $\mathrm{ev}_{gp} = g\mathrm{ev}_p g^{-1}$.
	\end{proof}
\end{lemma}

The motivation to consider equivariant vector bundles is the following lemma:
\begin{lemma}
	\label{lem:equiv-system-rank-bounded-existence}
	Let $M: X\rightarrow \mathrm{End}(V), p \mapsto M_p$
	and $r: X\rightarrow \mathbb{N}_{\ge 0}$
	an arbitrary function. The following are equivalent:

	\begin{enumerate}
		\item[(1)] The map $p\mapsto M_p$ is $G$-equivariant, and $\mathrm{rank}(M_p) \le r(p)$ for all $p\in X$.
		\item[(2)] There exists a maximal
		      $r$-rank-bounded subbundle $E \in \mathrm{Sub}^{r}(X\times V)$
		      and equivariant linear maps $\psi_1\in\mathrm{Hom}_G(V, \Gamma(E)), \psi_2\in \mathrm{Hom}_G(\Gamma(E),V)$
		      such that $M_p = \psi_2\mathrm{ev}_p\psi_1$ for all $p\in X$.
	\end{enumerate}

	More concisely (this is Eq.~\eqref{eq:equiv-system-rank-bounded}),
	\begin{equation}
		\mathcal{D}_r(V; X)= \bigcup_{E \in \mathrm{Sub}^r(X\times V)} \mathcal{D}(V; E).
	\end{equation}

	\begin{proof}
		We may assume $X = G/K$ is a homogeneous space. The general statement is
		recovered by considering each orbit separately. In this case, $r$ is just a
		nonnegative integer.
		\;\\
		(1) $\Rightarrow$ (2)\\
		Let $p_0\in X = G/K$ be the coset of the identity of $G$ (i.e. $p_0=K$ but
		we use the notation $p_0$ to emphasize that we think of it as a point).
		Then $M_{p_0} \in \mathrm{End}_K(V)$.
		Let $W \subset V$
		be a maximal $K$-subrepresentation of dimension at most $r$
		containing $\mathrm{im}(M_{p_0})$.
		We can then find $f_1\in \mathrm{Hom}_K(V, W), f_2\in \mathrm{Hom}_K(W, V)$
		such that $f_2 f_1 =M_{p_0}$.
		Let $E:=G\times_K W$ as in Example~\ref{ex:induced-rep}.
		Set $\psi_2 = \tilde f_2\in \mathrm{Hom}_G(\Gamma(E), V)$ and $\psi_1 = \tilde f_1\in \mathrm{Hom}_G(V, \Gamma(E))$ (see
		Example~\ref{ex:induced-rep}), for which $M_{p} = \psi_2\mathrm{ev}_p
			\psi_1$.
		Moreover, $\mathrm{rk}(E) = \mathrm{dim}(W)
			\le r$,
		so $E\in \mathrm{Sub}^r(X\times V)$.
		\;\\[.5em]
		(2) $\Rightarrow$ (1)\\
		By Lemma~\ref{lem:eval-map-equiv}, we have $\psi_2\mathrm{ev}_{gp} \psi_1 = \psi_2 g\mathrm{ev}_p g^{-1} \psi_1 = g\psi_2\mathrm{ev}_p \psi_1g^{-1}$.
	\end{proof}
\end{lemma}

\begin{lemma}
	\label{lem:maps-through-bundle-closed}
	\leavevmode
	\begin{enumerate}
		\item[(1)] $\mathcal{D}(V; E)$ is Zariski-closed in $\mathrm{End}(V)^X$ and irreducible.
		\item[(2)] If $E, E'$ are subbundles of $X\times V$ and $\mathcal{D}(V; E')\subset \mathcal{D}(V; E)$, then
		      $E'$ is isomorphic to a subbundle of $E$.
	\end{enumerate}
	\begin{proof}
		\;\\
		(1)\\
		$\mathcal{D}$ is irreducible because it is the image of an affine space
		under a polynomial map.

		We may assume $X = G/K$ is a homogeneous space; the general case follows
		from considering each orbit separately and
		the fact that the Cartesian product of two Zariski-closed sets is closed.

		Let $W := E_{K}$ be the fiber above the coset of the identity. Consider the map
		\begin{equation}
			\begin{aligned}
				 & \delta: \mathrm{End}_K(V) \rightarrow \mathrm{End}(V)^X \\
				 & M \mapsto (gMg^{-1})_{gK\in X}.
			\end{aligned}
		\end{equation}
		The map $\delta$ is a biregular isomorphism from $\mathrm{End}_K(V)$ onto
		$\delta(\mathrm{End}_K(V))$, and $\delta(\mathrm{End}_K(V))\subset \mathrm{End}(V)^X$ is closed.
		Moreover, we have $\delta(D) = \mathcal{D}$, where
		$D := \mathrm{Hom}_K(W, V)\mathrm{Hom}_K(V, W)\subset \mathrm{End}_K(V)$.
		It is easy to check that $D$ is closed (decompose $V, W$ into irreps of $K$, then $D$
		consists of exactly those equivariant endomorphisms
		with per-irrep rank constraints).
		Therefore, also $\mathcal{D} = \delta(D)$ is closed in $\mathrm{End}(V)^X$.

		(2)\\
		Again, we may assume $X = G/K$. Let $W, W'$ be the fibers over $K$ of $E$ and $E'$ respectively.
		Since $E, E' \subset X\times V$, the fibers $W, W'$ are subrepresentations of $\mathrm{Res}^G_K(V)$.
		Then $\mathcal{D}(V;E')\subset \mathcal{D}(V;E)$ is equivalent to
		$\mathrm{Hom}_K(W',V)\mathrm{Hom}_K(V, W')\subset \mathrm{Hom}_K(W,
			V)\mathrm{Hom}_K(V, W)$. Clearly, this can only happen if $W'$ embeds $K$-equivariantly
		into $W$, from which it follows that $E'$ is isomorphic to a subbundle of $E$.
	\end{proof}
\end{lemma}

\end{document}